\newtheorem{theorem}{Theorem}
\begin{document}
%
\title{Learning a Robust Representation via a Deep Network on Symmetric Positive Definite Manifolds}
%
%
%

\author{Zhi Gao, Yuwei Wu, Xingyuan Bu, and Yunde Jia \IEEEmembership{Member, IEEE}
\thanks{The authors are with Beijing Laboratory of Intelligent Information Technology, School of Computer Science, Beijing Institute of Technology (BIT), Beijing, China. Email:\{gaozhi\underline{\hspace{0.5em}}2017,wuyuwei,buxingyuan,jiayunde\}@bit.edu.cn.}
\thanks{Corresponding author: Yuwei Wu.}

}

\maketitle

\begin{abstract}
Recent studies have shown that aggregating convolutional features of a pre-trained Convolutional Neural Network (CNN) can obtain impressive performance for a variety of visual tasks. The symmetric Positive Definite (SPD) matrix becomes a powerful tool due to its remarkable ability to learn an appropriate statistic representation to characterize the underlying structure of visual features. In this paper, we propose to aggregate deep convolutional features into an SPD matrix representation through the SPD generation and the SPD transformation under an end-to-end deep network. To this end, several new layers are introduced in our network, including a nonlinear kernel aggregation layer, an SPD matrix transformation layer, and a vectorization layer. The nonlinear kernel aggregation layer is employed to aggregate the convolutional features into a real SPD matrix directly. The SPD matrix transformation layer is designed to construct a more compact and discriminative SPD representation. The vectorization and normalization operations are performed in the vectorization layer for reducing the redundancy and accelerating the convergence. The SPD matrix in our network can be considered as a mid-level representation bridging convolutional features and high-level semantic features. To demonstrate the effectiveness of our method, we conduct extensive experiments on visual classification. Experiment results show that our method notably outperforms state-of-the-art methods.

\end{abstract}

\begin{IEEEkeywords}
Feature Aggregation, SPD Matrix, Riemannain Manifold, Deep Convolutional Network
\end{IEEEkeywords}

%
\IEEEpeerreviewmaketitle

\section{Introduction}
%
%
%
%
\IEEEPARstart{D}{eep} Convolutional Neural Networks (CNNs) have shown great success in many vision tasks. There are several successful networks, \emph{e.g.,} AlexNet \cite{Krizhevsky2012ImageNet}, VGG \cite{Simonyan2014Very}, GoogleNet \cite{Szegedy2014Going}, Network In Network \cite{Lin2013Network} and ResNet \cite{He2015Deep}. Driven by the emergence of large-scale data sets and fast development of computation power, features based on CNNs have proven to perform remarkably well on a wide range of visual recognition tasks \cite{Zeiler2013Visualizing,Donahue2013DeCAF}. Two contemporaneous works introduced by Liu \emph{et al.} \cite{Lin2016Bilinear} and Babenko and Lempitsky \cite{Yandex2016Aggregating} demonstrate that convolutional features could be seen as a set of local features which can capture the visual representation related to objects. To make better use of deep convolutional features, many efforts have been devoted to aggregating them, such as max pooling \cite{Tolias2015Particular}, cross-dimensional pooling \cite{Kalantidis2015Cross}, sum pooling \cite{Yandex2016Aggregating}, and bilinear pooling \cite{Lin2016Bilinear,Gao2015Compact}. However, modeling these convolutional features to boost the feature learning ability of a CNN is still a challenging task. This work investigates a more effective scheme to aggregate convolutional features to produce a robust representation using an end-to-end deep network for visual tasks.

Recently, the Symmetric Positive Definite (SPD) matrix has been demonstrated the powerful representation ability and widely used in computer vision community, such as face recognition \cite{Huang2016Geometry,Harandi2017Dimensionality}, image set classification \cite{Huang2015Log}, transfer learning \cite{Herath2017Learning}, and action recognition \cite{Zhang2016Exploiting,Zhou2017Revisiting}. Through the theory of non-Euclidean Riemannain geometry, the SPD matrix often turns out to be better suited in capturing desirable data distribution properties. Accordingly, we attempt to aggregate the deep convolutional features into an powerful SPD matrix as a robust representation.

The second-order statistic information of convolutional features, \emph{e.g.,} the covariance matrix and Gaussian distribution, are the widely used SPD matrix representation endowed with CNNs \cite{Li2017Is,Ionescu2015Matrix,Yu2017Second}. The dimensionality of convolutional features extracted from CNNs may be much larger than that of hand-craft features. As a result, modeling convolutional features from CNNs by using the covariance matrix or Gaussian distribution is insufficient to precisely model the real feature distribution. When the dimension of features is larger than the number of features, the covariance matrix and Gaussian distribution is a symmetric Positive SemiDefinite (PSD) matrix, \emph{i.e.,} the singular matrix. Singular matrix makes the data have an unreasonable manifold structure. In this case, the Riemannain metrics, \emph{e.g.,} the affine-invariant distance and Log-Euclidean distance, are unsuitable to measure the manifold structure of SPD matrices. Moreover, most SPD matrix embedding on deep networks only contains the linear correlation of features. Owning the ability of capturing nonlinear relationship among features is indispensable for a generic representation.

It is thus desirable that a more discriminative and suitable SPD representation aggregated from deep features should be established in an end-to-end framework for visual analysis. To this end, we design a series of new layers to overcome existing issues aforementioned based on the following two observations.

\begin{figure*}[!t]
\centering
\includegraphics[width=7.0in]{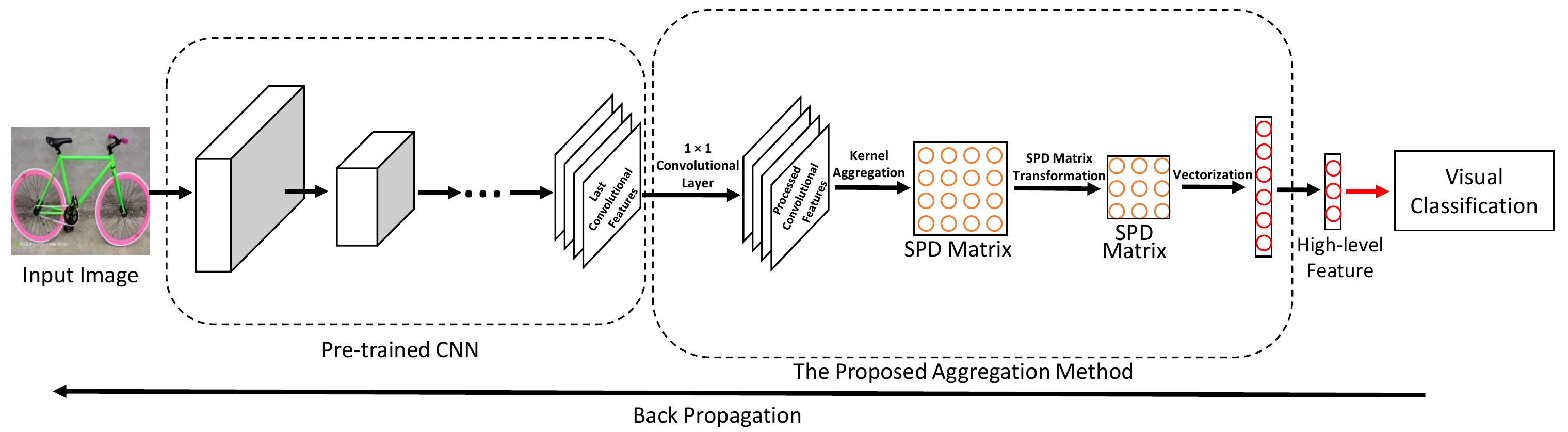}
\caption{The flowchart of our SPD aggregation network. We focus on both how to generate an SPD matrix from convolutional features and gain a more discriminative SPD representation by the transformation operation. Our general representation can be used for different tasks. To gain the processed convolutional features, the PCA and $1 \times 1$ convolutional layer are applied on the last convolutional features from the pre-trained CNN network. The kernel aggregation operation, SPD matrix transformation operation and vectorization operation correspond to the proposed three new layers. The kernel aggregation operation is to generate an SPD representation from the convolutional features. The SPD matrix transformation operation is to map the SPD matrix to a more compact and discriminative one by learnable parameters. The vectorization operation involves vectorization and normalization operations on the transformed SPD matrix.}
\label{fig:globalprocess}
\end{figure*}

\begin{itemize}
\item Kernel functions possess an ability of modeling nonlinear relationships of data, and they are fiexible and easy to be computed. Beyond covariance \cite{Wang2015Beyond} have witnessed significant advances of positive definite kernel functions whose kernel matrices are real SPD matrices, no matter what the number of the feature dimension and the number of features are. Since many kernel functions are differentiable, such as Radial Basis Function (RBF) kernel function, Polynomial kernel function and Laplacian kernel function \cite{Bo2010Kernel}, they can be readily embeded into a network to implement an end-to-end training, which is well aligned with the design requirements of a deep network.

\item Several deep SPD networks \cite{Dong2017Deep,Huang2016A,Zhang2017Deep} transform the SPD matrix to a new compact and discriminative matrix. The network input is an SPD matrix. The transformed matrix is still an SPD matrix which can capture desirable data properties. We find that the transformed SPD matrix after the learnable layers leads to better performance than the original SPD matrix. The output SPD matrix not only have characteristics of a general SPD matrix that captures the desirable properties of visual features but also is more suitable and discriminative to the specific visual task. 
\end{itemize}

Motivated by empirical observations mentioned above, we introduce a convolutional feature aggregation operation which consists of the SPD generation and the SPD transformation. Three new layers including a kernel aggregation layer, an SPD matrix transformation layer and a vectorization layer, are designed to replace the traditional global pooling layers and fully connected (FC) layers. Concretely, we deem each feature map as a sample and present a kernel aggregation layer using a nonlinear kernel function to generate an SPD matrix. The proposed kernel matrix models a nonlinear relationship among feature maps and ensures that the SPD matrix is nonsingular. More importantly, our kernel matrix is differentiable, which entirely meets requirements of a deep network. The SPD matrix transformation layer is employed to map the SPD matrix to a more discriminative and compact one. Thanks to the symmetry property, the vectorization layer carries out the upper triangle vectorization and normalization operations to the SPD matrix followed by the classifier. The architecure of our network is illustrated in Fig.~\ref{fig:globalprocess}. The proposed method first generates an SPD matrix based on convolutional features and then transforms the initial SPD matrix to a more discriminative one. It can not only capture the real spatial information but also encode high-level variation information among convolutional features. Actually, the obtained descriptor acts as a mid-level representation bridging convolutional features and high-level semantics features. The resulting vector can contribute to visual classification tasks, as validated in experiments.

In summary, our contributions are three-fold.

(1) We apply the SPD matrix non-linear aggregation to the convolutional feature aggregation field by the generation and the transformation two processes. In this way, it can learn an compactness and robustness SPD matrix representation to characterize the underlying structure of convolutional features.

(2) We carry out the nonlinear aggregation of convolutional features under a Riemannain deep network architecture, where three novel layers are introduced, \emph{i.e.,} a kernel aggregation layer, an SPD matrix transformation layer and a vectorization layer. The state-of-the-art performance of our SPD aggregation network is consistently achieved over the visual classification tasks.

(3) We exploit the faster matrix operations to avoid the cyclic calculation in forward and backward backpropagations of the kernel aggregation layer. In addition, we present the component decomposition and retraction of the Orthogonal Stiefle manifold to carry out the backpropagation on the SPD matrix transformation layer. 

The remaining sections are organized as follows. We review the recent works about feature aggregation methods in both Euclidean Space and Riemannain Space in Section \ref{relatedwork}. Section \ref{SPDAggregationMethod} presents the details of our SPD aggregation method.  We report and discuss the experimental results in Section \ref{experiment}, and conclude the paper in Section \ref{conclusion}.

\section{Related Work}

\label{relatedwork}

Feature aggregation is an important problem in computer vision tasks. Recent works have witnessed significant advances of CNNs. It is still a challenging work to find a suitable way to aggregate convolutional features. In this section, we review typical techniques of feature aggregation in both the Euclidean space and Riemannain space.

\subsection{Convolutional Feature Aggregation in the Euclidean Space}

An effective image representation is an essential element for visual recognition due to the object appearance variations caused by pose, view angle, and illumination changes. Traditional methods typically obtain the image representation by aggregating hand-crafted local features (\emph{e.g.,} SIFT) into a global image descriptor. Popular aggregation schemes include Bag-of-words (BOW) \cite{Sivic2003Video}, Fisher Vector (FV) \cite{Liu2014Encoding}, and Vector of Locally Aggregated Descriptor (VLAD) \cite{Ng2015Exploiting}. Gong \emph{et al.} \cite{Gong2014Multi} introduced a multi-scale orderless pooling scheme to aggregates FC6 features of local patches into a global feature using VLAD. The VLAD ignores different effects of each cluster center. Cimpoi \emph{et al.} \cite{Cimpoi2015Deep} treated the convolutional layer of CNNs as a filter bank and built an orderless representation using FV. In addition, Liu \emph{et al.} \cite{Liu2015Cross} proposed a cross convolutional layer pooling scheme which regards feature maps as a weighting filter to the local features. Tolias \emph{et al.} \cite{Tolias2015Particular} max pooled convolutional features of the last convolutional layer to represent each patch and achieved compelling performance for object retrieval. Babenko \emph{et al.} \cite{Yandex2016Aggregating} compared different kinds of aggregation methods (\emph{i.e.,} max pooling, sum pooling and fisher vector) for last convolutional layer features and demonstrated the sum-pooled convolutional descriptor is really competitive with other aggregation schemes.

Works mentioned above only treat the CNN as a black-box feature extractor rather than studying on properties of CNN features in an end-to-end framework. Several researchers \cite{Lin2016Bilinear,Zhang2016Deep,Arandjelovic2016NetVLAD} suggested that the end-to-end network can achieve better performance because it is sufficient by itself to discover good features for visual tasks. Arandjelovic \emph{et al.} \cite{Arandjelovic2016NetVLAD} proposed a NetVLAD which adopts an the end-to-end framework for weakly supervised place recognition. Based on the ResNet, Zhang \emph{et al.} \cite{Zhang2016Deep} introduced an extended version of the VLAD, \emph{i.e.,} Deep-TEN, for texture classification. Lin \emph{et al.} \cite{Lin2016Bilinear} presented a general orderless pooling model named Bilinear to compute the outer product of local features. He \emph{et al.} \cite{He2015Spatial} introduced a spatial pyramid pooling method eliminating the constrain of the fixed-size input image.

Recent research shows that exploiting the manifold structure representation is more effective than the hypothetical Euclidean distribution in some visual tasks. The difference between our method and the traditional aggregation methods in the Euclidean space is that we use the powerful SPD manifold structure to aggregate the desirable data distributions of features. We design an SPD aggregation scheme to generate the SPD matrix as the resulting representation, and transform the SPD representation to more discriminative one by learnable layers.

\subsection{Convolutional Feature Aggregation in the Riemannain Space}

The aggregation methods in the non-Euclidean space have been successful applied. It can capture more appropriate feature distributions information. The second-order statistic information has better performance than the first-order statistic \cite{Li2017Is}, such as average pooling. Some works directly regard the second-order statistic information as the SPD matrix. Ionescu \emph{et al.} \cite{Ionescu2015Matrix} proposed a DeepO2P network that uses a covariance matrix as the image representation. They mapped points on the manifold to the logarithm tangent space and derived a new chain rule for derivatives. Li \emph{et al.} \cite{Li2017Is} presented a matrix normalized covariance method exploring the second-order statistic. This work can tackle the singular issue of the covariance matrix by the normalization operation. Yu and Salzmann \cite{Yu2017Second} introduced a covariance descriptor unit to integrates second-order statistic information. The covariance matrix of convolutional features is generated and then transformed to a vector for the softmax classifier. Compared with our network, these three works are confined to the drawbacks of covariance matrices. Engin \emph{et al.} \cite{Engin2017DeepKSPD} designed a deep kernel matrix based SPD representation, but didn't contains the transformation process.

Other SPD Riemannain networks mainly project an SPD matrix to a more discriminative one. Dong \emph{et al.} \cite{Dong2017Deep} and Huang and Gool \cite{Huang2016A} proposed Riemannain networks contemporaneously, in which the inputs of their networks are SPD matrices. The networks projects high dimensional SPD matrices to a low dimensional discriminative SPD manifold by a nonlinear mapping. Zhang \emph{et al.} \cite{Zhang2017Deep} introduced new layers to transform and vectorize the SPD matrix for action recognition, where the input is a nonlinear kernel matrix modeling correlation of frames in a video. However, these three works only focused on how to transform the SPD matrix without utilizing the powerful convolutional features. The generation of the input SPD matrix can not be guided by the loss function. In contrast, our method focuses on not only the SPD matrix transformation but also the generation from convolutional features.

Our work is closely related with \cite{Li2017Is,Yu2017Second,Engin2017DeepKSPD}. We make it clear that the proposed convolutional feature aggregation method is composed of generation and transformation processes. Compared with \cite{Li2017Is}, our method utilizes the kernel matrix as the representation instead of the second-order statistic covariance matrix, characterizing complex nonlinear variation information of features. In addition, our aggregation method contains a learnable transformation process than \cite{Li2017Is}, making SPD representation more compact and robust. The generated SPD matrix in our method is more powerful than the covariance matrix in \cite{Yu2017Second}, avoiding some drawbacks of PSD matrix. In addition, instead of a transformation from a matrix to a vector, the vectorization operation in our work is taking the upper triangle of a matrix since there are already transformation operations between the SPD matrices. Compared to \cite{Engin2017DeepKSPD}, our SPD representation can be more compact and robust through the transformation process.

\begin{figure}[!t]
\centering
\includegraphics[width=3.5in]{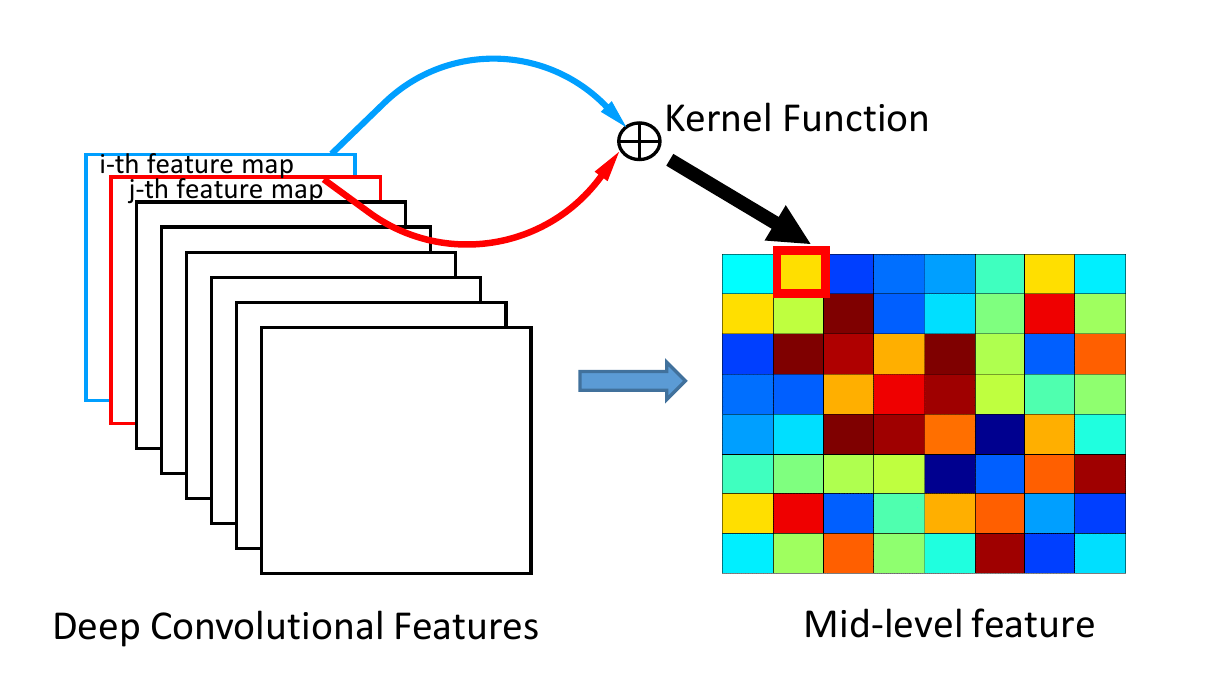}
\caption{Using the kernel function to generate an SPD matrix from convolutional features. The input is convolutional features and the output is an SPD kernel matrix.}
\label{fig:kernelaggregation}
\end{figure}

\section{SPD Aggregation Method}
\label{SPDAggregationMethod}

Our model aims to aggregate convolutional features into a powerful SPD representation in an end-to-end fashion. To this end, we design three novel layers including a kernel aggregation layer, an SPD matrix transformation layer and a vectorization layer. Our SPD aggregation can be applied to the visual classification. Specifically, the convolutional features pass through the proposed three layers followed by an FC layer and a loss function. The intermediate generated SPD matrix can be treated as a mid-level representation which is a connection between convolutional features and high-level features. The architecture of our network is illustrated in Fig.~\ref{fig:globalprocess}(c).

\subsection{Preprocessing of Convolutional Features}

A CNN model trained on a large dataset such as ImageNet can have a better general representation ability. We would like to fuse the convolutional features of the last convolutional layer and adjust the dimension of convolutional features for different tasks. We introduce a convolutional layer whose filter's size is $1 \times 1$ between the last convolutional layer of the off-the-shelf model and the kernel aggregation layer to make the processed convolutional features more adaptive to the SPD matrix representation. A Relu layer follows the $1 \times 1 $ convolutional layer to increase the nonlinear ability.

\subsection{Kernel Aggregation Layer}

We present the kernel aggregation layer to aggregate convolutional features into an initial SPD matrix. Let $X \in \mathbb{R}^ {C\times H \times W}$ be $3$-dimensional convolutional features. $C$ is the number of channels, \emph{i.e.,} the number of feature maps, $H$ and $W$ are the height and width of each feature map, respectively. Let $x_i \in \mathbb{R}^{C}$ denote the $i$-th local feature, and there are $N$ local features in total, where $ N = H \times W$. $f_i \in \mathbb{R}^{H \times W}$ is the $i$-th feature map.

Although several approaches have applied a covariance matrix $Cov$ to be a generic feature representation and obtained promising results, two issues remain to be addressed. First, the rank of covariance matrix should hold $rank(Cov) \leq \min(C, N-1)$, otherwise covariance matrix is prone to be singular when the dimension $C$ of local features is larger than the number of local features extracted from an image region. Second, for a generic representation, the capability of modeling nonlinear feature relationship is essential. However, covariance matrix only evaluates the linear correlation of features.

To address these issues, we adopt the nonlinear kernel matrix as a generic feature representation to aggregate deep convolutional features. In particular, we take advantage of the Riemannain structure of SPD matrices to describe the second-order statistic and nonlinear correlations among deep convolutional features. The nonlinear kernel matrix is capable of modeling nonlinear feature relationship and is guaranteed to be nonsingular. Different from the traditional kernel-based methods whose entries evaluates the similarity between a pair of samples, we apply the kernel mapping to each feature $f_1, f_2, \cdots, f_C$ rather than each sample $x_1, x_2, \cdots, x_N$. Mercer kernels are usually employed to carry out the mapping implicitly. The Mercer kernel is a function $\mathcal{K}(\cdot,\cdot)$ which can generate a kernel matrix $K \in \mathbb{R}^{C \times C}$ using pairwise inner products between mapped convolutional features for all the input data points. The $K_{ij}$ in our nonlinear kernel matrix $K$ can be defined as

\begin{equation}
\label{equation:forwardkernelaggregation}
K_{ij}= \mathcal{K} \left(f_i,f_j\right) = \big\langle \phi(f_i), \phi(f_j) \big\rangle,
\end{equation}
where $\phi (\cdot)$ is an implicit mapping. In this paper, we exploit the Radial Basis Function (RBF) kernel function expressed as

\begin{equation}
\label{equation:rbfkernelfunction}
\mathcal{K}\left( f_i,f_j \right) = exp \left(-\lVert f_i - f_j \lVert ^ {2} / 2\sigma^{2} \right),
\end{equation}
where $\sigma$ is a positive constant and set to the mean Euclidean distances of all feature maps. What Eq.~\eqref{equation:rbfkernelfunction} reveals is the nonlinear relationship between convolutional features.

We show an important theorem for kernel aggregation operation. Based on the Theorem~\ref{theorem:MicchelliTheorem}, the kernel matrix $K$ of the RBF kernel function is guaranteed to be positive definite no matter what $C$ and $N$ are.

\begin{theorem}
\label{theorem:MicchelliTheorem}
Let $\mathbf{X}=\{ \mathbf{x}_i \}_{i=1}^{M}$ denotes a set of different points and $\mathbf{x}_i \in \mathbb{R}^{n} $. Then the kernel matrix $K \in \mathbb{R}^{M \times M}$ of the RBF kernel function $\mathcal{K}$ on $\mathbf{X}$ is guaranteed to be a positive definite matrix, whose $(j,k)$-th element is ${K}_{jk} = \mathcal{K} \left(\mathbf{x}_j,\mathbf{x}_k\right)= exp \left(- \alpha \lVert \mathbf{x}_j - \mathbf{x}_k \lVert ^ {2} \right)$ and $\alpha > 0$.
\end{theorem}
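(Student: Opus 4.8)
\emph{Proof strategy.} The plan is to reduce the statement to the strict positive definiteness of the matrix $M$ with entries $M_{jk}=\exp\!\big(2\alpha\langle\mathbf{x}_j,\mathbf{x}_k\rangle\big)$ by a diagonal congruence, then establish $M\succeq 0$ via a power-series expansion together with the Schur product theorem, and finally upgrade positive \emph{semi}definiteness to positive definiteness using the hypothesis that the points are pairwise distinct. First I would use $\lVert\mathbf{x}_j-\mathbf{x}_k\rVert^2=\lVert\mathbf{x}_j\rVert^2+\lVert\mathbf{x}_k\rVert^2-2\langle\mathbf{x}_j,\mathbf{x}_k\rangle$ to write
\[
K_{jk}=e^{-\alpha\lVert\mathbf{x}_j\rVert^2}\;e^{2\alpha\langle\mathbf{x}_j,\mathbf{x}_k\rangle}\;e^{-\alpha\lVert\mathbf{x}_k\rVert^2},
\]
so that $K=DMD$ with $D=\mathrm{diag}\big(e^{-\alpha\lVert\mathbf{x}_1\rVert^2},\dots,e^{-\alpha\lVert\mathbf{x}_M\rVert^2}\big)$ positive and invertible; hence $K$ is positive definite if and only if $M$ is.

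Next I would expand each entry of $M$ as the absolutely convergent series $M_{jk}=\sum_{p=0}^{\infty}\frac{(2\alpha)^p}{p!}\langle\mathbf{x}_j,\mathbf{x}_k\rangle^p$. For each $p$ the matrix $G^{(p)}$ with entries $\langle\mathbf{x}_j,\mathbf{x}_k\rangle^p$ is the Gram matrix of the $p$-fold tensor powers $\mathbf{x}_i^{\otimes p}$, because $\langle\mathbf{x}_j^{\otimes p},\mathbf{x}_k^{\otimes p}\rangle=\langle\mathbf{x}_j,\mathbf{x}_k\rangle^p$; therefore every $G^{(p)}$ is positive semidefinite (equivalently, this is the Schur product theorem applied $p$ times to $G^{(1)}\succeq 0$, with $G^{(0)}$ the all-ones matrix). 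Since $M$ is a series with positive coefficients of positive semidefinite matrices, $M\succeq 0$, and consequently $K\succeq 0$.

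The crux is upgrading this to $M\succ 0$, and this is where distinctness must genuinely be used. Suppose $\mathbf{c}^{\top}M\mathbf{c}=0$ for some $\mathbf{c}\neq 0$. Each term $\frac{(2\alpha)^p}{p!}\,\mathbf{c}^{\top}G^{(p)}\mathbf{c}\ge 0$ and they sum to zero, so each vanishes; since $\mathbf{c}^{\top}G^{(p)}\mathbf{c}=\big\lVert\sum_j c_j\mathbf{x}_j^{\otimes p}\big\rVert^2$, we get $\sum_j c_j\mathbf{x}_j^{\otimes p}=0$ for every $p\ge 0$, hence $\sum_j c_j\langle\mathbf{x}_j,\mathbf{y}\rangle^p=0$ for all $\mathbf{y}\in\mathbb{R}^n$ after pairing with $\mathbf{y}^{\otimes p}$. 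Summing over $p$ with weights $1/p!$ gives $\sum_j c_j e^{\langle\mathbf{x}_j,\mathbf{y}\rangle}\equiv 0$ on $\mathbb{R}^n$. Now pick a direction $\mathbf{v}$ for which the numbers $\langle\mathbf{x}_j,\mathbf{v}\rangle$ are pairwise distinct — such $\mathbf{v}$ exists because the "bad" directions satisfy $\langle\mathbf{x}_j-\mathbf{x}_k,\mathbf{v}\rangle=0$ for some $j\neq k$ and thus form a finite union of proper hyperplanes — and restrict to the line $\mathbf{y}=t\mathbf{v}$: this yields $\sum_j c_j e^{\langle\mathbf{x}_j,\mathbf{v}\rangle t}=0$ for all $t$, with distinct exponents. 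Linear independence of exponential functions then forces $\mathbf{c}=0$, a contradiction; hence $M$ and therefore $K$ is positive definite.

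I expect the congruence and the tensor-power/Schur argument to be routine; the one place demanding care is the last paragraph, where distinctness enters, either through linear independence of exponentials as above or, equivalently, through Bochner's theorem: $e^{-\alpha\lVert\cdot\rVert^2}$ is the Fourier transform of the everywhere-positive Gaussian $(\pi/\alpha)^{n/2}e^{-\lVert\xi\rVert^2/4\alpha}$, so $\mathbf{c}^{*}K\mathbf{c}=\int\big\lvert\sum_j c_j e^{i\langle\mathbf{x}_j,\xi\rangle}\big\rvert^2(\pi/\alpha)^{n/2}e^{-\lVert\xi\rVert^2/4\alpha}\,d\xi\ge 0$, with equality only if the real-analytic function $\sum_j c_j e^{i\langle\mathbf{x}_j,\xi\rangle}$ vanishes identically, i.e. only if $\mathbf{c}=0$. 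Either route closes the argument.
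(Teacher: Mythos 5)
Your proposal is correct, and your primary argument takes a genuinely different route from the paper. The paper proves the theorem by the Fourier/Bochner route: it writes the Gaussian kernel as an integral against the everywhere-positive Gaussian weight $e^{-\lVert\xi\rVert^2/(2\alpha)}$, so that the quadratic form becomes $\bm{c}^{\top}K\bm{c}=(2\pi/\alpha)^{n/2}\int e^{-\lVert\xi\rVert^{2}/(2\alpha)}\,\lvert\sum_k c_k e^{-i\xi\mathbf{x}_k}\rvert^{2}\,d\xi\ge 0$, with equality forcing $\sum_k c_k e^{-i\xi\mathbf{x}_k}\equiv 0$ and hence $\bm{c}=0$ by linear independence of distinct complex exponentials --- which is precisely the alternative you sketch in your final paragraph. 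Your main argument instead factors $K=DMD$ by a diagonal congruence, expands $M$ as a positive-coefficient power series of Gram matrices of tensor powers (equivalently, iterated Schur products), and upgrades semidefiniteness to definiteness by projecting onto a generic direction and invoking linear independence of real exponentials $e^{a_j t}$ with distinct exponents. The trade-off: your route is entirely elementary (no Fourier integral identity needed) and isolates exactly where pairwise distinctness of the points enters, via the finite-union-of-hyperplanes argument; the paper's route is shorter and generalizes immediately to any kernel whose Fourier transform is a strictly positive integrable function. One minor remark in your favor: your statement of the Fourier identity is cleaner than the paper's Eq.~(3), which conflates the transform $\hat{\mathcal{K}}(\xi)$ with the integral representation of the kernel value itself.
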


\begin{proof}
The Fourier transform convention $\hat{\mathcal{K}}(\xi)$ of the RBF kernel function $\mathcal{K} \left(\mathbf{x}_i,\mathbf{x}_j\right) = exp \left(- \alpha \lVert \mathbf{x}_j - \mathbf{x}_k \lVert ^ {2} \right)$ is 
\begin{equation}
\label{equation:Fouriertransform}
\hat{\mathcal{K}}(\xi)= {(2\pi / \alpha)}^{n/2} \int_{R^n} e^{i\xi \mathbf{x}_{j}} e^{-i\xi \mathbf{x}_{k}} e^{-\lVert \xi \lVert ^{2} / {2\alpha}} d\xi.
\end{equation}
Then we calculate the quadratic form of the kernel matrix $K$. Let $\bm{c}=(c_1,\cdots,c_M) \in \mathbb{R}^{M \times 1}$ denote an arbitrary nonzero vector. The quadratic form $Q$ is
\begin{equation}
\begin{aligned}
& Q=\bm{c}^{\top}K\bm{c} = \sum_{j=1}^{M} \sum_{k=1}^{M} c_j c_k exp \left(- \alpha \lVert \mathbf{x}_j - \mathbf{x}_k \lVert ^ {2} \right) \\
& = \sum_{j=1}^{M} \sum_{k=1}^{M} c_j c_k {(2\pi / \alpha)}^{n/2} \int_{R^n} e^{i\xi \mathbf{x}_{j}} e^{-i\xi \mathbf{x}_{k}} e^{-\lVert \xi \lVert ^{2} / {(2\alpha)}} d\xi \\
& = {(2\pi / \alpha)}^{n/2} \int_{R^n} e^{-\lVert \xi \lVert ^{2} / {(2\alpha)}} \lVert \sum_{k=1}^{M} c_k e^{-i\xi \mathbf{x}_{k}} \lVert^{2} d\xi ,
\end{aligned}
\end{equation}
where $\top$ is the transpose operation. Because $e^{-\lVert \xi \lVert ^{2} / {(2\alpha)}}$ is a positive and continuous function, the quadratic form $Q = 0$ on the condition that 
\begin{equation}
\sum_{k=1}^{M} c_k e^{-i\xi x_{k}} = 0.
\end{equation}
However, the complex exponentials $e^{-i\xi \mathbf{x}_1},\cdots,e^{-i\xi \mathbf{x}_M}$ is linear independence. Accordingly, $Q > 0$ and kernel matrix $K$ is a positive definite matrix.
\end{proof}

In this work, $K$ is the generated SPD matrix as the mid-level image representation. Any SPD manifold optimization can be applied directly, without structure being destroyed. The toy example of the kernel aggregation is illustrated in Fig.~\ref{fig:kernelaggregation}. As we all known, the kernel aggregation layer should be differentiable to meet the requirement of an end-to-end deep learning framework. Clearly, Eq.~\eqref{equation:rbfkernelfunction} is differentiable with respect to the input $X$. Denoting by $\mathcal{L}$ the loss function, the gradient with respect to the kernel matrix is $\frac{\partial{\mathcal{L}}}{\partial{K}}$. $\frac{\partial{\mathcal{L}}}{\partial{K_{ij}}}$ is an element in $\frac{\partial{\mathcal{L}}}{\partial{K}}$. We compute the partial derivatives of $\mathcal{L}$ with respect to $f_i$ and $f_j$, which are

\begin{equation}
\label{equation:gradienttofifj}
\begin{aligned}
& \frac{\partial{\mathcal{L}}}{\partial{f_i}} = \sum_{j=1}^{H \times W} \frac{\partial{\mathcal{L}}}{\partial{K_{ij}}}\frac{K_{ij}}{-2\sigma^2}\left( f_i - f_j\right) \\
& \frac{\partial{\mathcal{L}}}{\partial{f_j}} = \sum_{i=1}^{H \times W} \frac{\partial{\mathcal{L}}}{\partial{K_{ij}}}\frac{K_{ij}}{-2\sigma^2}\left( f_j - f_i\right).
\end{aligned}
\end{equation}
In this process, the gradient of the SPD matrix can flow back to convolutional features.

During forward propagation Eq.~\eqref{equation:rbfkernelfunction} and backward propagation Eq.~\eqref{equation:gradienttofifj}, we have to do $C^{2}$ cycles to compute the kernel matrix $K$ and $2C^{2}$ cycles to gain the gradient with respect to convolutional features $\frac{\partial{\mathcal{L}}}{\partial{X}}$, where $C$ is the number of channels. Obviously, both the forward and backward propagations are computationally demanding. It is well known that the computation using matrix operations is preferable due to the parallel computing in computers. Accordingly, our kernel aggregation layer is able to be calculated in a faster way via matrix operations. Let's reshape the convolutional features $X \in \mathbb{R}^ {C\times H \times W}$ to a matrix $M \in \mathbb{R}^{C \times N}$. Each row of $M$ is a reshaped feature map $\mathbf{f}_{i} \in \mathbb{R}^{1 \times N}$ obtained from $f_i$ and each column of $M$ is the convolutional local feature $x_i$. Note that, $\lVert f_i - f_j \lVert ^ {2}$ in Eq.~\eqref{equation:rbfkernelfunction} can be expanded to $\lVert f_i - f_j \lVert ^ {2} = \mathbf{f}_{i}\mathbf{f}_{i}^{\top} - {2} \mathbf{f}_{i}\mathbf{f}_{j}^{\top} + \mathbf{f}_{j}\mathbf{f}_{j}^{\top}$. For each of inner products $\mathbf{f}_{i}\mathbf{f}_{i}^{\top}$, ${2} \mathbf{f}_{i}\mathbf{f}_{j}^{\top}$ and $\mathbf{f}_{j}\mathbf{f}_{j}^{\top}$, it needs to be calculated $C^{2}$ times in cycles of Eq.~\eqref{equation:rbfkernelfunction}. Now, we can convert $C^{2}$ times inner products operation to a matrix multiplication operation which only needs to be computed once, 
\begin{equation}
\label{equation:K1K2K3}
\begin{aligned}
& K1 = \mathbf{1} \left( M {\circ} M \right)^{\top} \\
& K2 = \left( M {\circ} M\right) \mathbf{1}^{\top} \\
& K3 = MM^\top, 
\end{aligned}
\end{equation}
where ${\circ}$ is the Hadamard product and $\mathbf{1} \in \mathbb{R}^{C \times N}$ is a matrix whose elements are all ``1"s. $K1$, $K2$ and $K3$ are all $C \times C$ real matirces. The element $K1\left(i , j\right)$ is the 2-norm of $i$-th row vector of $M$, and is equal to the calculation output of $\mathbf{f}_{i}\mathbf{f}_{i}^{\top}$. The element $K2\left(i , j\right)$ is the 2-norm of $j$-th column vector of $M$, and is equal to the calculation output of $\mathbf{f}_{j}\mathbf{f}_{j}^{\top}$. The element $K3\left(i , j\right)$ is equal to $\mathbf{f}_{i}\mathbf{f}_{j}^{\top}$. $K1$, $K2$ and $K3$ can be calculated in advance.

Therefore, we compute $-\lVert f_i - f_j \lVert ^ {2} / 2\sigma^{2}$ in Eq.~\eqref{equation:rbfkernelfunction} by the matrix addition and multiplication, and implement the $exp\left( \cdot \right)$ to the matrix in a parallel computing way instead of calculating each element in the cycle. Then the kernel matrix $K$ can be calculated by matrix operations as follows.

\begin{equation}
\label{equation:forwardmatrix}
K = exp\left(-\left(K1 + K2 - {2}K3\right) /2\sigma^{2}\right),
\end{equation}
where $exp\left( A \right)$ means the exponential operation to each element in the matrix $A$. Although calculating directly the $exp\left( \cdot \right)$ function is time-consuming, it can be computed efficiently in a matrix form through Eq.~\eqref{equation:forwardmatrix}, which is faster than through Eq.~\eqref{equation:rbfkernelfunction}. Similarly, back propagation process in Eq.~\eqref{equation:gradienttofifj} can also be carried out in the matrix operation which is given by
\begin{equation}
\label{equation:backwardmatrix}
\frac{\partial{\mathcal{L}}}{\partial{M}} = 4 \left( \mathbf{1}^{\top} \left( \frac{ \frac{\partial{\mathcal{L}}}{\partial{K}}\circ K}{-2 \sigma^{2}} \right) \circ M^{\top} - M^{\top} \left( \frac{\frac{\partial{\mathcal{L}}}{\partial{K}}\circ K}{-2 \sigma^{2}} \right) \right) ^{\top}.
\end{equation}

\textbf{Remark:} The covariance matrix descriptor, as a special case of SPD matrices, captures feature correlations compactly in an object region, and therefore has been proven to be effective for many applications. Given the local features $x_1, x_2, \cdots, x_N$, the covariance descriptor $Cov$ is defined as
\begin{equation} \label{covariancematrix}
  Cov = \frac{1}{N-1} \sum_{i=1}^{N} ({x}_{i} - {\mu})({x}_{i} - {\mu})^{\top},
\end{equation}
where ${\mu} = \frac{1}{N}\sum_{i=1}^{N} {x}_{i}$ is the mean vector. The covariance matrix can also be seen as a kernel matrix where the $\left( i,j\right)$-th element of the covariance matrix $Cov$ can be expressed as

\begin{equation}
\label{equation:covarianceinnerproduct}
Cov_{ij} =   \Bigg\langle \frac{\overline{f_i}}{\sqrt{N-1}}, \frac{\overline{f_j}}{\sqrt{N-1}} \Bigg\rangle,
\end{equation}
where $\langle \cdot , \cdot \rangle$ denotes the inner product, $ \overline {f_i}  = f_i-\mu_i \mathbf{1}$ and $\mu_i$ is the mean value of $f_i$. Therefore, the covariance matrix corresponds to a special case of the nonlinear kernel matrix defined in Eq.~\eqref{equation:forwardkernelaggregation}, where $\phi(f_i) = (\overline{f_i} - \mu_i\mathbf{1})/ \sqrt{N-1} $. Through this way, we can find that covariance matrices contain the simple linear correlation features. Whether the covariance matrix is a positive definite matrix depends on the $C$ and $N$, \emph{i.e.,} $rank(Cov) \leq \min(C, N-1)$.

\subsection{SPD Matrix Transformation Layer}

As discussed in \cite{Dong2017Deep,Huang2016A, Zhang2017Deep}, SPD matrix transformation networks are capable of achieving the better performance than the original SPD matrix. Inspired by \cite{Yu2016Weakly} and \cite{Yu2017Second}, we add a learnable layer to make the network more flexible and more adaptive to the specific task. Based on the SPD matrix generated by the kernel aggregation layer, we expect to transform the existing SPD representation to be a more discriminative, suitable and desirable matrix. To preserve the powerful ability of the SPD matrix, the transformed matrix should also be an SPD matrix. Moreover, we attempt to adjust the dimension to make the SPD matrix more flexible and compact. Here, we design the SPD matrix transformation layer in our network.

\begin{figure}[!t]
\centering
\includegraphics[width=3.5in]{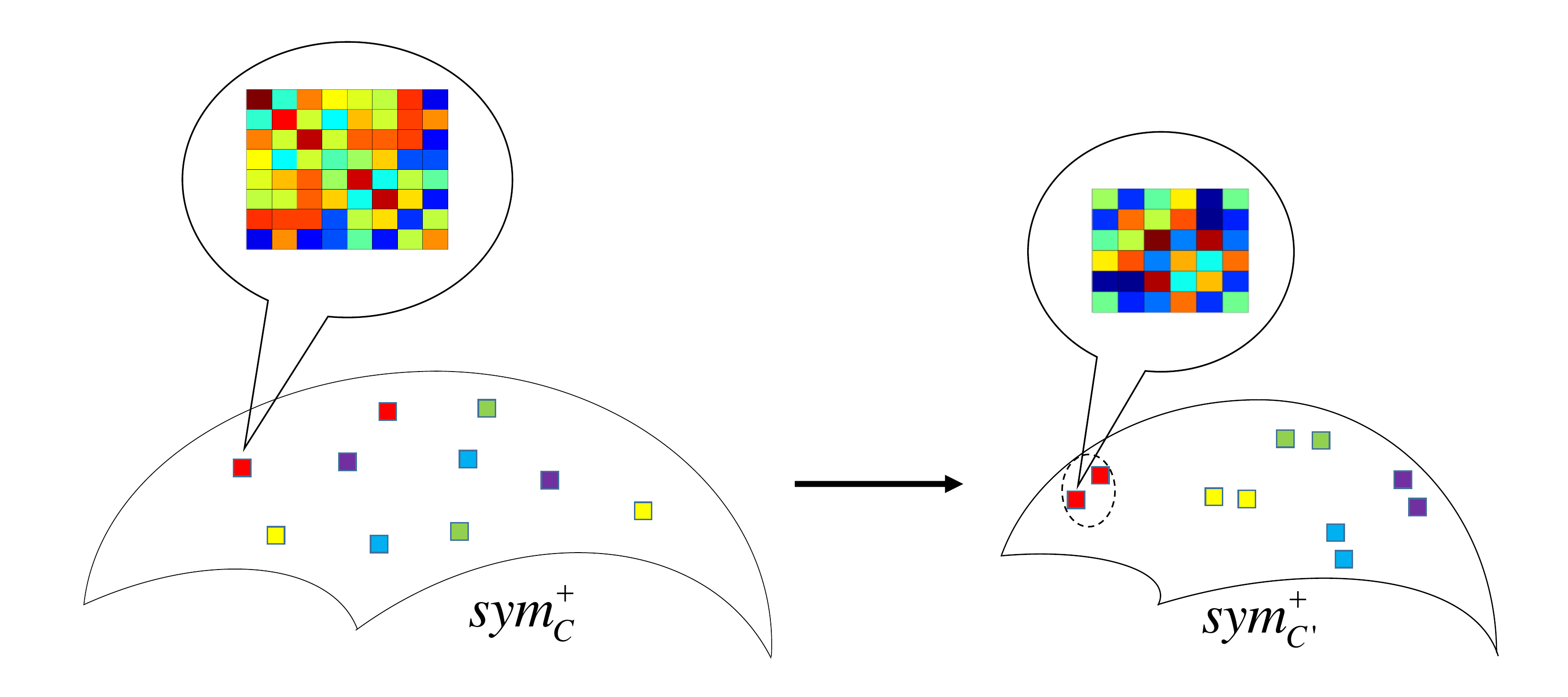}
\caption{The illustration of the projection from a manifold to another.}
\label{fig:spdmanifold}
\end{figure}

Let's define the Riemannain manifold of $n \times n$ SPD matrices as $Sym_n^{+}$. The output SPD matrix $K$ of the kernel aggregation layer lies on the manifold $Sym_C^{+}$. We use a matrix mapping to complete the transformation operation. As depicted in Fig.~\ref{fig:spdmanifold}, we map the input SPD matrix which lies on the original manifold $Sym_C^{+}$ to a new discriminative and compact SPD matrix in another manifold $Sym_{C^{\prime}}^{+}$, where $C^{\prime}$ is the dimension of the SPD matrix transformation layer. In this way, the desired transformed matrix can be obtained by a learnable mapping. Given a $C \times C$ SPD matrix $K$ as an input, the output SPD matrix can be calculated as

\begin{equation}
\label{equation:forwardmaniflodmapping}
Y = W^{\top}KW,
\end{equation}
where $Y \in \mathbb{R}^{{C}^{\prime} \times {C}^{\prime}}$ is the output of the transformation layer, and $W \in \mathbb{R}^{C \times C^{\prime}}$ are learnable parameters which are randomly initialized during training. $C^{\prime}$ controls the size of $Y$. Based on the Theorem \ref{theorem:columnfullrank}, the learnable parameters $W$ should be a column full rank matrix to make $Y$ be an SPD matrix as well. 

\begin{theorem}
\label{theorem:columnfullrank}
Let $A \in \mathbb{R}^{C\times C}$ denote an SPD matrix, $W \in \mathbb{R} ^ {C \times C'}$ and $B=W^{\top}AW$, where $C \geq C'$. $B$ is an SPD matrix if and only if $W$ is a column full rank matrix, \emph{i.e.,} $rank(W) = C'$.
\end{theorem}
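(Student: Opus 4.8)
The statement is an ``if and only if'', so I would split it into the two implications, and the natural tool throughout is the definition of positive definiteness via the quadratic form $\bm{z}^{\top}B\bm{z}$ for $\bm{z}\in\mathbb{R}^{C'}$ together with the identity $\bm{z}^{\top}B\bm{z}=(W\bm{z})^{\top}A(W\bm{z})$. Note first that $B=W^{\top}AW$ is automatically symmetric whenever $A$ is, so in both directions only the definiteness (strict positivity of the quadratic form) is at issue.

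\textbf{Sufficiency ($W$ column full rank $\Rightarrow$ $B$ SPD).} Assume $\mathrm{rank}(W)=C'$. Take any nonzero $\bm{z}\in\mathbb{R}^{C'}$. Since $W$ has trivial kernel (full column rank), $W\bm{z}\neq\bm{0}$. Because $A$ is SPD, $(W\bm{z})^{\top}A(W\bm{z})>0$, hence $\bm{z}^{\top}B\bm{z}>0$ for every nonzero $\bm{z}$, so $B$ is positive definite, and being symmetric it is SPD.

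\textbf{Necessity ($B$ SPD $\Rightarrow$ $W$ column full rank).} I would argue by contraposition: suppose $\mathrm{rank}(W)<C'$. Then $\ker(W)\neq\{\bm{0}\}$, so there is a nonzero $\bm{z}_0$ with $W\bm{z}_0=\bm{0}$, whence $\bm{z}_0^{\top}B\bm{z}_0=(W\bm{z}_0)^{\top}A(W\bm{z}_0)=0$. This exhibits a nonzero vector on which the quadratic form of $B$ vanishes, so $B$ is not positive definite, contradicting the assumption. Therefore $B$ SPD forces $\mathrm{rank}(W)=C'$.

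\textbf{Main obstacle.} Honestly there is no deep obstacle here; the only thing to be careful about is bookkeeping the roles of $A$'s positive definiteness (used for the strict inequality in the sufficiency direction) versus $W$'s rank (used to guarantee $W\bm{z}\neq\bm{0}$), and to state explicitly that symmetry of $B$ is inherited from $A$ so that ``positive definite quadratic form'' indeed upgrades to ``SPD''. One should also remark that the hypothesis $C\ge C'$ is what makes ``column full rank'' ($\mathrm{rank}(W)=C'$) the relevant maximal-rank condition; without it the statement would need rephrasing. I would present the two short paragraphs above essentially verbatim as the full proof.
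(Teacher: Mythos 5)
Your proposal is correct and follows essentially the same route as the paper's proof: both directions reduce to the identity $\bm{z}^{\top}B\bm{z}=(W\bm{z})^{\top}A(W\bm{z})$, with sufficiency using triviality of $\ker(W)$ plus positive definiteness of $A$, and necessity observing that a nonzero kernel vector of $W$ would annihilate the quadratic form (you phrase this by contraposition, the paper directly, but the argument is identical). Your added remark that symmetry of $B$ is inherited from $A$ is a small, worthwhile point the paper leaves implicit.
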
 

\begin{proof}
If $A$ is an SPD matrix, $W$ is a column full rank matrix and $rank(W) = C'$. For homogeneous equations $W\mathbf{x}=\mathbf{0}$ and $\mathbf{x} \in \mathbb{R}^{C' \times 1}$, $W\mathbf{x}=\mathbf{0}$ only has a zero solution, where $\mathbf{0}$ is the zero vector. For arbitrary nonzero vector $\mathbf{x}$, $W\mathbf{x} \not= \mathbf{0}$. We calculate the quadric form $\mathbf{x}^{\top}B\mathbf{x}$,
\begin{equation}
 \mathbf{x}^{\top}B\mathbf{x} = \mathbf{x}^{\top} W^{\top}AW \mathbf{x} = (W\mathbf{x})^{\top}A(W\mathbf{x}).
\end{equation}
Because $W\mathbf{x} \not= \mathbf{0}$ and $A$ is an SPD matrix, $\mathbf{x}^{\top}B\mathbf{x} > \mathbf{0}$. This proves that $B$ is an SPD matrix.

On the other hand, if $B$ is an SPD matrix, for arbitrary nonzero vector $\mathbf{x} \in \mathbb{R}^{C' \times 1}$, $\mathbf{x}^{\top}B\mathbf{x} = (W\mathbf{x})^{\top}A(W\mathbf{x}) > \mathbf{0}$. Beccause $A$ is an SPD matrix, $W\mathbf{x} \not=\mathbf{0}$. Only if $\mathbf{x}=\mathbf{0}$ can lead to $W\mathbf{x} =\mathbf{0}$. Accordingly, $rank(W) = C'$ and $W$ is a column full rank matrix.
\end{proof}

Since there are learnable parameters in the SPD matrix transformation layer, we should not only compute the gradient of loss function $\mathcal{L}$ with respect to the input $K$, but also calculate the gradient with respect to parameters $W$. The gradient with respect to the input $K$ is
\begin{equation}
\label{equation:backwardmaniflodmappingk}
\frac{\partial{\mathcal{L}}}{\partial{K}} = W\frac{\partial{\mathcal{L}}}{\partial{Y}}W^{\top},
\end{equation}
where $\frac{\partial{\mathcal{L}}}{\partial{Y}}$ is the gradient with respect to the output $Y$.

Since $W$ is a column full rank matrix, it is on a non-compact Stiefel manifold \cite{Absil2009Optimization}. However, directly optimizing $W$ on the non-compact Stiefel manifold is infeasible. To overcome this issue, we relax $W$ to be semi-orthogonal, \emph{i.e.,} $W^{\top}W=I_{C_{\prime}}$. In this case, $W$ is on the orthogonal Stiefel manifold $St \left(C^{\prime}, C \right)$. The optimization space of parameters $W$ is changed from the non-compact Stiefel manifold to the orthogonal Stiefel manifold $St \left(C^{\prime}, C \right)$. Considering the manifold structure of $W$, the optimization process is quite different from the gradient descent method in the Euclidean space. We first compute the partial derivative with respect to $W$. Then we convert the partial derivative to the manifold gradient that lies on the tangent space. Along the tangent gradient, we find a new point on the tangent space. Finally, the retraction operation is applied to map the new point on the tangent space back to the orthogonal Stiefel manifold. Thus, an iteration of the optimization process on the manifold is completed. This process is illustrate in Fig.~\ref{fig:optimizew}. Next we will elaborate each step.

First the partial derivative $\frac{\partial{\mathcal{L}}}{\partial{W}}$ with respect to $W$ is computed by
\begin{equation}
\label{equation:backwardmaniflodmappingweuclidean}
\frac{\partial{\mathcal{L}}}{\partial{W}} = K^{\top}W\frac{\partial{\mathcal{L}}}{\partial{Y}}+KW\left(\frac{\partial{\mathcal{L}}}{\partial{Y}}\right)^{\top}.
\end{equation}
The partial derivative $\frac{\partial{\mathcal{L}}}{\partial{W}}$ doesn't contain any manifold constraints. Considering $W$ is a point on the orthogonal Stiefel manifold, the partial derivative needs to be converted to the manifold gradient, which is on the tangent space. As shown in Fig.~\ref{fig:normaltangent}, on the orthogonal Stiefel manifold, the partial derivative $\frac{\partial{\mathcal{L}}}{\partial{W}}$ is a Euclidean gradient at the point $W$, not tangent to the manifold. The tangential component of $\frac{\partial{\mathcal{L}}}{\partial{W}}$ is what we need for optimization, which lies on the tangent space. The normal component is perpendicular to the tangent space. We decompose $\frac{\partial{\mathcal{L}}}{\partial{W}}$ into two vectors that are perpendicular to each other, \emph{i.e.,} one is tangent to the manifold and the other is the normal component based on the Theorem~\ref{theorem:OrthogonalStiefleManifold}.

\begin{theorem}
\label{theorem:OrthogonalStiefleManifold}
Let $M$ denote an orthogonal Stiefel manifold and $\mathbf{X}$ is a point on $M$. $F(\mathbf{X})$ denotes a function defined on the orthogonal Stiefel manifold. If the partial derivatives of $F$ with respect to $\mathbf{X}$ is $F_{\mathbf{X}}$, the manifold gradient $\nabla F$ at $\mathbf{X}$ which is tangent to $M$ is $\nabla F = F_{\mathbf{X}}- \mathbf{X} F_{\mathbf{X}}^{\top}\mathbf{X}$.
\end{theorem}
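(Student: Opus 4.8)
The plan is to verify directly that $\nabla F := F_{\mathbf{X}} - \mathbf{X} F_{\mathbf{X}}^{\top}\mathbf{X}$ has the two properties that characterize the Riemannian gradient: it is tangent to $M=\{\mathbf{X}:\mathbf{X}^{\top}\mathbf{X}=I\}$ at $\mathbf{X}$, and, through the manifold metric, it represents the differential of $F$, i.e.\ $\langle \nabla F,Z\rangle = \mathrm{tr}(F_{\mathbf{X}}^{\top}Z)$ for every tangent vector $Z$. Since the Riemannian gradient is the \emph{unique} tangent vector with this representing property, these two checks settle the claim. First I would record the tangent space: differentiating the constraint $\mathbf{X}^{\top}\mathbf{X}=I$ along a curve through $\mathbf{X}$ gives $T_{\mathbf{X}}M=\{Z:\mathbf{X}^{\top}Z+Z^{\top}\mathbf{X}=0\}$, so $\mathbf{X}^{\top}Z$ is skew-symmetric for every tangent $Z$. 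The metric I would use is the canonical metric on the Stiefel manifold, $\langle A,B\rangle=\mathrm{tr}\!\big(A^{\top}(I-\tfrac12\mathbf{X}\mathbf{X}^{\top})B\big)$ (see \cite{Absil2009Optimization}), which is precisely the metric for which the stated closed form is the gradient.

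Tangency is immediate: substituting $\nabla F=F_{\mathbf{X}}-\mathbf{X}F_{\mathbf{X}}^{\top}\mathbf{X}$ into $\mathbf{X}^{\top}(\cdot)+(\cdot)^{\top}\mathbf{X}$ and using $\mathbf{X}^{\top}\mathbf{X}=I$, the four terms cancel in pairs, so $\nabla F\in T_{\mathbf{X}}M$. For the representing property I would expand $\langle \nabla F,Z\rangle$ for an arbitrary tangent $Z$: multiplying out $(F_{\mathbf{X}}-\mathbf{X}F_{\mathbf{X}}^{\top}\mathbf{X})^{\top}(I-\tfrac12\mathbf{X}\mathbf{X}^{\top})Z$ and collapsing $\mathbf{X}^{\top}\mathbf{X}=I$ leaves $\mathrm{tr}(F_{\mathbf{X}}^{\top}Z)$ plus the two terms $-\tfrac12\,\mathrm{tr}(GS)$ and $-\tfrac12\,\mathrm{tr}(G^{\top}S)$, where $G=\mathbf{X}^{\top}F_{\mathbf{X}}$ and $S=\mathbf{X}^{\top}Z$. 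Because $S$ is skew-symmetric, cyclicity of the trace gives $\mathrm{tr}(G^{\top}S)=-\mathrm{tr}(GS)$, so these two corrections cancel and $\langle\nabla F,Z\rangle=\mathrm{tr}(F_{\mathbf{X}}^{\top}Z)=DF(\mathbf{X})[Z]$. Uniqueness of the Riemannian gradient then identifies $\nabla F$ with the claimed expression, and the corresponding split of $F_{\mathbf{X}}$ into a tangent summand $\nabla F$ and a normal summand $\mathbf{X}F_{\mathbf{X}}^{\top}\mathbf{X}$ follows.

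The computation itself is short; the only thing that requires care is the trace bookkeeping — repeated use of cyclic invariance together with the skew-symmetry of $\mathbf{X}^{\top}Z$ and the relation $\mathbf{X}^{\top}\mathbf{X}=I$ — and the one genuine modeling decision is the choice of metric. This is worth stating explicitly, because under the plain ambient Euclidean inner product the orthogonal projection of $F_{\mathbf{X}}$ onto $T_{\mathbf{X}}M$ is instead $F_{\mathbf{X}}-\tfrac12\mathbf{X}\big(\mathbf{X}^{\top}F_{\mathbf{X}}+F_{\mathbf{X}}^{\top}\mathbf{X}\big)$, which differs from the stated formula by the symmetrization; the clean expression $F_{\mathbf{X}}-\mathbf{X}F_{\mathbf{X}}^{\top}\mathbf{X}$ is specifically the canonical-metric gradient, and it is the quantity one actually wants to feed into the retraction step. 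I expect that pinning down and justifying this metric convention — rather than any algebraic manipulation — will be the main obstacle.
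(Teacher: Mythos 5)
Your proposal is correct and follows essentially the same route as the paper: both characterize the tangent space via the skew-symmetry of $\mathbf{X}^{\top}\Delta$, adopt the canonical metric $g_c(\Delta_1,\Delta_2)=\mathrm{tr}\bigl(\Delta_1^{\top}(I-\tfrac12\mathbf{X}\mathbf{X}^{\top})\Delta_2\bigr)$, and identify $\nabla F$ through the defining relation $\mathrm{tr}(F_{\mathbf{X}}^{\top}\Delta)=g_c(\nabla F,\Delta)$. The only difference is that the paper merely asserts this equation ``can be solved'' to give $F_{\mathbf{X}}-\mathbf{X}F_{\mathbf{X}}^{\top}\mathbf{X}$, whereas you explicitly verify tangency and the representing property and invoke uniqueness --- a more complete write-up of the same argument, and your remark that the formula is metric-dependent (differing from the Euclidean projection) is accurate and worth keeping.
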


\begin{proof}
Because $\mathbf{X}$ is a point on the orthogonal Stiefel manifold, $\mathbf{X}^{\top}\mathbf{X}=I$, where $I$ is an identity matrix. Differentiating $\mathbf{X}^{\top}\mathbf{X}=I$ yields $\mathbf{X}^{\top} \Delta + {\Delta}^{\top}\mathbf{X} = 0$, where $\Delta$ is a tangent vector. Thus, $\mathbf{X}^{\top} \Delta $ is a skew-symmetric matrix. Note that, the canonical metric for the orthogonal Stiefel manifold at the point $Y$ is $g_c\left( {\Delta}_1, {\Delta}_2 \right) = tr \big({\Delta}_1^{\top} (I- \frac{1}{2} Y Y^{\top}) {\Delta}_2 \big)$. For all tangent vectors $\delta$ at $\mathbf{X}$, we can get that
\begin{equation}
\label{equation:tangent}
tr F_\mathbf{X}^{\top} \Delta = g_c(\nabla F, \Delta) = tr {(\nabla F)}^{\top} (I- \frac{1}{2} \mathbf{X} \mathbf{X}^{\top}) \Delta.
\end{equation}
Because $\mathbf{X}^{\top} (\nabla F)$ is a skew-symmetric matrix, Eq.~\eqref{equation:tangent} can be solved, \emph{i.e.,} $\nabla F = F_\mathbf{X }- \mathbf{X} F_\mathbf{X}^{\top}\mathbf{X}$.
\end{proof}

\begin{figure}[!t]
\centering
\includegraphics[width=3.5in]{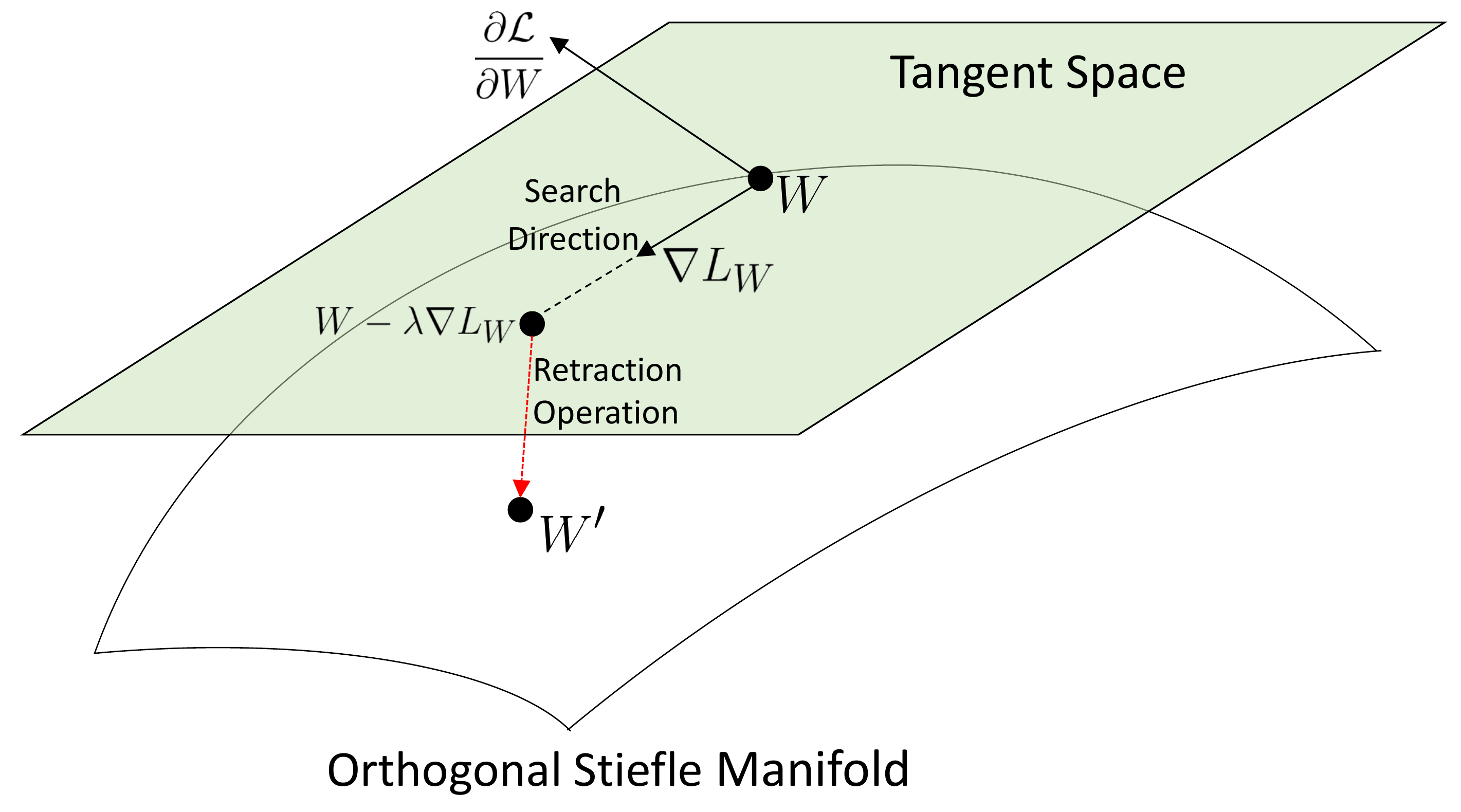}
\caption{The illustration of the optimization process of $W$. $W$ is an original point on the orthogonal Stiefel manifold. $W'$ is a new point after an iterative update. $\frac{\partial{\mathcal{L}}}{\partial{W}}$ is the partial derivative of the loss function with respect to $W$. $\nabla L_{W}$ is the manifold gradient lying on the tangent space.}
\label{fig:optimizew}
\end{figure}

Then the tangential component $\nabla L_{W}$ at $W$ can be expressed by the partial derivative $\frac{\partial{\mathcal{L}}}{\partial{W}}$,
\begin{equation}
\label{equation:backwardmaniflodmappingwmanifold}
\nabla L_{W} = \frac{\partial{\mathcal{L}}}{\partial{W}} - W \left({\frac{\partial{\mathcal{L}}}{\partial{W}}}\right)^{\top}W.
\end{equation}
$\nabla L_{W}$ is the manifold gradient of the orthogonal Stiefel manifold. Searching along the giadient $\nabla L_{W}$ gets a new point on the tangent space. Finally, we use the retracting operation to map the point on the tangent space back to the Stiefel manifold space,
\begin{equation}
\label{equation:backwardmaniflodmappingwretracting}
W := q\left(W - \lambda \nabla L_{W}\right),
\end{equation}
where $q\left( \cdot \right)$ is the retraction operation mapping the data back to the manifold. Specificly, $q\left( A \right)$ denotes the $Q$ matrix of QR decomposition to $A$. $A \in \mathbb{R}^{n \times p}$, $A = QR$, where $Q \in \mathbb{R}^{n \times p}$ is a semi-orthogonal matrix and $R \in \mathbb{R}^{p \times p}$ is a upper triangular matrix. $\lambda$ is the learning rate. 

Note that, we can make a Relu activation function layer follow the SPD matrix transformation layer. The output of the Relu layer is still an SPD matrix based on the Theorem~\ref{theorem:reluspdmatrix}.

\begin{figure}[!t]
\centering
\includegraphics[width=3.5in]{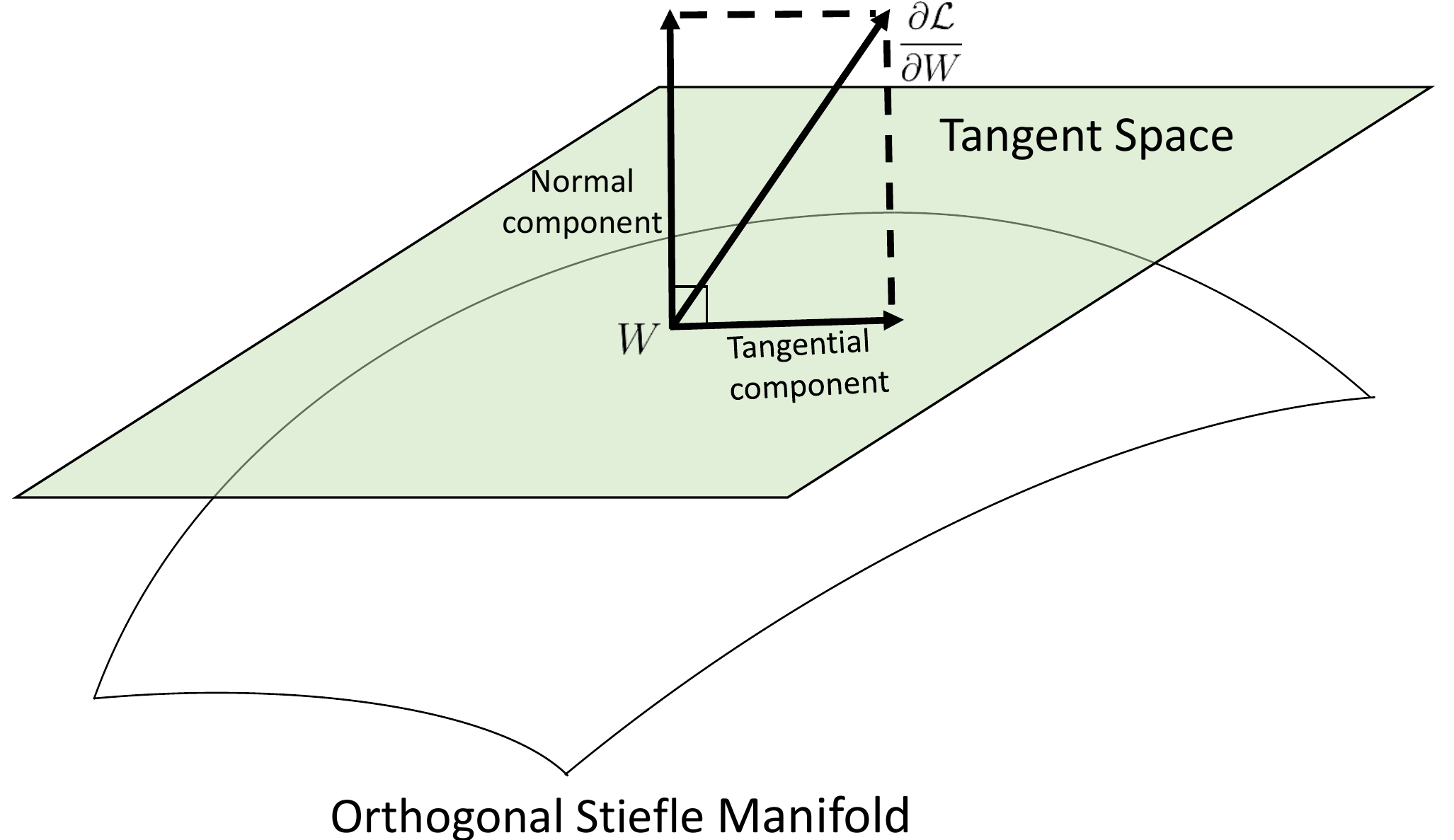}
\caption{The illustration of tangential and normal components of a vector. These two components are perpendicular. $v$ is the vector on the surface. The tangential component of $v$ is a vector on the tangent space and normal conpoment is perpendicular to the tangent space.}
\label{fig:normaltangent}
\end{figure}

\begin{theorem}
\label{theorem:reluspdmatrix}
The relu activation function on a matrix $Y$ is $f(Y)$. Let $Z=f(Y)$,
\begin{eqnarray} \nonumber
Z_{ij} = \left\{ \begin{array}{l}
0, \quad if \ \ Y_{ij}  \textless 0\\
Y_{ij},  \quad if \ \ Y_{ij} \geq 0
\end{array} \right..
\end{eqnarray}
If $Y$ is an SPD matrix, $Z$ is an SPD matrix.
\end{theorem}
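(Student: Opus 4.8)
The plan is to recognize the entrywise ReLU of $Y$ as a Hadamard (Schur) product and thereby reduce the claim to a statement about a binary matrix. Write $Z = Y \circ S$, where $\circ$ denotes the Hadamard product and $S$ is the $0/1$ mask matrix defined by $S_{ij}=1$ if $Y_{ij}\ge 0$ and $S_{ij}=0$ otherwise. Since $Y$ is SPD, its diagonal entries are strictly positive (take $\mathbf{x}=\mathbf{e}_i$ in the quadratic form to get $Y_{ii}>0$), so ReLU leaves the diagonal of $Y$ untouched; consequently $S$ is symmetric with $S_{ii}=1$ for every $i$, the trace of $Z$ equals that of $Y$ and is positive, and $Z$ is symmetric because $Y$ and $S$ are. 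Hence the entire content of the statement is the positive definiteness of $Z$.

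The next step is to invoke the Schur product theorem: the Hadamard product of two positive semidefinite matrices is positive semidefinite, and moreover if one factor is positive definite and the other is positive semidefinite with no zero entry on its diagonal, then the product is positive definite. Since $Y$ is SPD and $S$ has all-ones diagonal, it would then suffice to prove that $S$ itself is positive semidefinite, after which $Z = Y \circ S$ being SPD is automatic. So the proof reduces to showing that the off-diagonal sign pattern of an arbitrary SPD matrix always yields a positive semidefinite $0/1$ matrix.

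The hard part will be exactly this last reduction, and I expect it to be the real obstacle: a symmetric $0/1$ matrix with unit diagonal is in general \emph{not} positive semidefinite, so $S$ cannot be expected to be positive semidefinite for an arbitrary SPD $Y$---genuinely more about $Y$ than mere positivity of its quadratic form must be used. The cleanest way I would try to save the argument is to strengthen the hypothesis to \emph{diagonal dominance}: if $Y$ is symmetric, strictly diagonally dominant, and has a positive diagonal, then $Z$ inherits all three properties at once, since ReLU preserves the diagonal and only shrinks the magnitudes of the off-diagonal entries, so $Z$ is again SPD with no appeal to the Schur product theorem at all. A second option is to impose a combinatorial \emph{sign-balance} condition on the off-diagonal pattern of $Y$ under which $S$ is, up to simultaneous row and column permutation, block diagonal with all-ones blocks and hence positive semidefinite. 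I would first check whether the matrix $Y = W^{\top}KW$ produced by the SPD transformation layer---where $K$ is the RBF kernel matrix, all of whose entries are strictly positive---actually satisfies diagonal dominance or some weaker sufficient condition, since that is what is really needed for the claim to hold in the setting of this paper.
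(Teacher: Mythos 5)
You should know first that the paper does not actually prove this theorem: its entire ``proof'' is a citation to the appendix of Dong \emph{et al.}\ \cite{Dong2017Deep}, so there is no in-paper argument to compare yours against. Your decomposition $Z = Y\circ S$ and the reduction via the Schur product theorem to the question of whether the $0/1$ mask $S$ is positive semidefinite is the right lens, and the doubt you voice at the end is not merely an obstacle to your particular strategy --- it is fatal to the theorem itself. The statement is false as written. Take $c=1/\sqrt{2}$ and
\begin{equation*}
Y \;=\; \begin{pmatrix} 1+\epsilon & c & 0 & -c\\ c & 1+\epsilon & c & 0\\ 0 & c & 1+\epsilon & c\\ -c & 0 & c & 1+\epsilon \end{pmatrix},
\end{equation*}
which for small $\epsilon>0$ is SPD (it is $\epsilon I$ plus the rank-two Gram matrix of the unit vectors at angles $0^\circ,45^\circ,90^\circ,135^\circ$, whose $(j,k)$ entry is $\cos(\theta_j-\theta_k)$). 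Entrywise ReLU zeroes only the $(1,4)$ and $(4,1)$ entries, producing the tridiagonal Toeplitz matrix with diagonal $1+\epsilon$ and off-diagonal $c$, whose smallest eigenvalue is $1+\epsilon-2c\cos(\pi/5)\approx \epsilon-0.144<0$ for, say, $\epsilon=0.05$. So $Z$ is not even positive semidefinite, and correspondingly the mask $S$ here (all-ones except for zeros at $(1,4)$ and $(4,1)$) is not PSD. Entrywise maps preserving positive definiteness in all dimensions are essentially the absolutely monotonic ones (Schoenberg--Rudin), and ReLU is not among them; this is why no refinement of the Schur-product argument can close the gap.

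Your proposed repairs are the honest ways out, but they prove a different theorem. Symmetric strict diagonal dominance with positive diagonal is indeed preserved by entrywise ReLU (the diagonal is untouched because $Y_{ii}=\mathbf{e}_i^{\top}Y\mathbf{e}_i>0$, and off-diagonal magnitudes can only shrink), giving positive definiteness by Gershgorin; however, SPD does not imply diagonal dominance, and there is no reason the layer output $Y=W^{\top}KW$ should be diagonally dominant even though the RBF kernel matrix $K$ has all positive entries (so that ReLU acts trivially on $K$ itself but not on $Y$). If the goal is a rectification that provably preserves the SPD structure, the correct construction is spectral rather than entrywise: write $Y=U\Sigma U^{\top}$ and rectify the eigenvalues, $Z=U\max(\Sigma,\epsilon I)U^{\top}$, which is SPD by construction. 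In short, your proposal correctly diagnoses that the key lemma is unavailable; the conclusion to draw is that the theorem, and the cited proof it leans on, cannot be salvaged in entrywise form.
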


\begin{proof}
The detailed proof of this theorem is shown in the appendix section of \cite{Dong2017Deep}.
\end{proof}

\subsection{Vectorization layer}
Since inputs of the common classifier is all vectors, we should vectorize the SPD matrix to a vector. Because of the symmetry of the robust SPD matrix achieved by the transformation layer, $Y$ is determined by $\frac{C^{\prime} \times \left( C^{\prime} + 1 \right)}{2}$ elements, \emph{i.e.,} the upper triangular matrix or the lower triangular matrix of $Y$. Here, we take the upper triangular matrix of $Y$ and reshape it into a vector as the input of the loss function. Let's denote the vector by $V$, 

\begin{equation}
\label{equation:vectorization}
\begin{split}
    \ & V = \Big [ Y_{11}, \sqrt{2}Y_{12}, ..., \sqrt{2}Y_{1{C^{\prime}}}, Y_{22}, \\
    & \sqrt{2}Y_{23}, ..., \sqrt{2}Y_{{C^{\prime}}{(C^{\prime}-1)}}, Y_{{C^{\prime}}{C^{\prime}}} \Big ] \\
    & = \Big [ V_1, V_2, ..., V_{\frac{C^{\prime} \times \left( C^{\prime} + 1 \right)}{2}} \Big ].
\end{split}
\end{equation}

Due to the symmetry of the matrix $Y$, the gradient $\frac{\partial{\mathcal{L}}}{\partial{Y}}$ is also a symmetric matrix. For the diagonal elements of $Y$, its gradient of the loss function is equal to the gradient of its corresponding element in the vector $V$, while the gradient of non-diagonal elements of $Y$ is $\sqrt{2}$ times of the element in the vector $V$. The gradient with respect to $Y$ is given by

\begin{small}
\begin{equation}
\label{equation:backvectorization}
\frac{\partial{\mathcal{L}}}{\partial{Y}} = \left[\begin{IEEEeqnarraybox*}[][c]{,c/c/c/c/c,}
\frac{\partial{\mathcal{L}}}{\partial{V_1}}&\frac{\sqrt{2}\partial{\mathcal{L}}}{\partial{V_2}}&\cdots&\frac{\sqrt{2}\partial{\mathcal{L}}}{\partial{V_{C^{\prime}-1}}}&\frac{\sqrt{2}\partial{\mathcal{L}}}{\partial{V_{C^{\prime}}}} \\
\frac{\sqrt{2}\partial{\mathcal{L}}}{\partial{V_2}}&\frac{\partial{\mathcal{L}}}{\partial{V_{C^{\prime}+1}}}&\frac{\sqrt{2}\partial{\mathcal{L}}}{\partial{V_{C^{\prime}+2}}}&\cdots&\frac{\sqrt{2}\partial{\mathcal{L}}}{\partial{V_{2 \times C^{\prime}-1}}} \\
\vdots&\vdots&\vdots&\cdots&\vdots \\
\frac{\sqrt{2}\partial{\mathcal{L}}}{\partial{V_{C^{\prime}}}}&\frac{\sqrt{2}\partial{\mathcal{L}}}{\partial{V_{2 \times C^{\prime}-1}}}&\frac{\sqrt{2}\partial{\mathcal{L}}}{\partial{V_{3 \times C^{\prime}-3}}}&\cdots&\frac{\partial{\mathcal{L}}}{\partial{V_{\frac{C^{\prime} \times \left( C^{\prime} + 1 \right)}{2}}}}
\end{IEEEeqnarraybox*}\right].
\end{equation}
\end{small}

\begin{figure}[!t]
\centering
\includegraphics[width=3.5in]{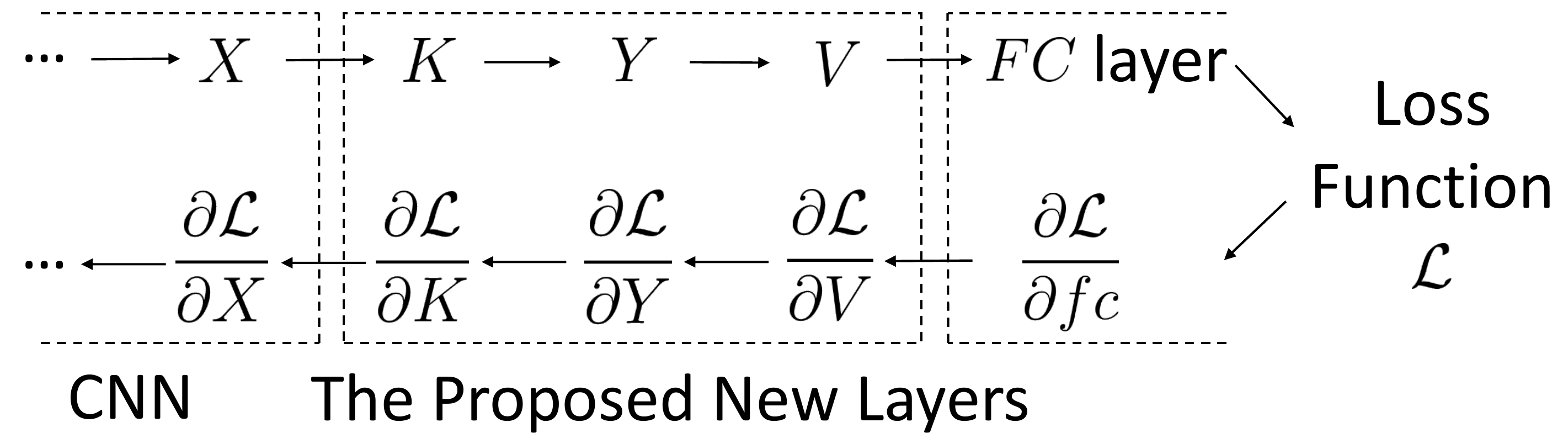}
\caption{Illustration of the forward and backward propagations of networks with the proposed aggregation method.}
\label{fig:forwardbackward}
\end{figure}

The normalization operation is important as well. We use the power normalization $\left( V_i:= sign \left(V_i \right) \sqrt{|V_i|} \right)$ and $l_2$ normalization $\left( V:= V/ {\|V\|}_2 \right)$ operation following the vector $V$. The gradient formulation Eq.~\eqref{equation:backwardmatrix}, Eq.~\eqref{equation:backwardmaniflodmappingk} and Eq.~\eqref{equation:backwardmaniflodmappingwretracting} calculate the gradient with respect to the input of the corresponding layer, respectively. Once these gradients are obtained, the standard SGD backpropagation can be easily employed to update the parameters directly with the learning rate. Fig.~\ref{fig:forwardbackward} shows the data flow in our network with the proposed three layers including forward and backward propagations. $fc$ denotes the output of the last fully-connected layer. In Algorithm~\ref{algorithm:UpdateTransformationlayer}, we summarize the training process of our model. We can use more than one SPD transformation layers in the network, where each one can be followed by a Relu layer as the activation layer.

\begin{algorithm}
 \caption{Trainging Process of Our Model}
 \begin{algorithmic}[1]
 \renewcommand{\algorithmicrequire}{\textbf{Input:}}
 \renewcommand{\algorithmicensure}{\textbf{Output:}}
 \REQUIRE Training data $I=\{I_i\}_{i=1}^n$, where $n$ is the number of training samples. SPD matrix transformation layer's initial parameters $W$. The other parameters in the CNN $\theta$. Learning rate $\lambda$. 
 \ENSURE  SPD matrix transformation layer's parameters $W$. Other layer's parameters $\theta$.

  \WHILE {not converge}
    \STATE Compute the convolutional features $M$ by forwarding $I_i$ through convolutional layer.
    \STATE Compute the SPD matrix $K$ by Eq.\eqref{equation:forwardmatrix}.
    \STATE Compute the transformed SPD matrix $Y$ by Eq.\eqref{equation:forwardmaniflodmapping}.
    \STATE Compute the vector representation $V$ by Eq.\eqref{equation:vectorization}.
    \STATE Compute the loss $\mathcal{L}$.
    \STATE Compute the gradient $\frac{\partial{\mathcal{L}}}{\partial{V}}$.
    \STATE Compute the gradient $\frac{\partial{\mathcal{L}}}{\partial{Y}}$ by Eq.\eqref{equation:backvectorization}.
    \STATE Compute the gradient $\frac{\partial{\mathcal{L}}}{\partial{K}}$ by Eq.\eqref{equation:backwardmaniflodmappingk}.
    \STATE Update the parameters $W$ by Eq.\eqref{equation:backwardmaniflodmappingweuclidean}, Eq.\eqref{equation:backwardmaniflodmappingwmanifold} and Eq.\eqref{equation:backwardmaniflodmappingwretracting}.
    \STATE Compute the gradient $\frac{\partial{\mathcal{L}}}{\partial{M}}$ with respect to the last convolutional features.
    \STATE Compute the gradient $\frac{\partial{\mathcal{L}}}{\partial{\theta}}$ with respect to the other parameters.
    \STATE Update the other parameters $\theta=\theta - \lambda {\frac{\partial{\mathcal{L}}}{\partial{\theta}}}$.
  \ENDWHILE
 \RETURN $\theta$ and $w$ 
 \end{algorithmic}
 \label{algorithm:UpdateTransformationlayer}
\end{algorithm}

\section{Experiment}
\label{experiment}

To demonstrate the benefits of our method, we conduct extensive experiments on visual classification tasks. We conduct experiments on visual classification tasks to show the performace of the SPD aggregation framework including the generation and transformation processes. We present the visual classification tasks on five datasets. We choose the challenging texture and fine-grained classification tasks. The texture classification tasks need a powerful global representation, because of the features of texture should be invariant to translation, scaling and rotation. Differences among fine-grained images are very small. It is challenging to represent these differences in the aggregation process.

\begin{table*}[!t]
\renewcommand{\arraystretch}{1.3}
\caption{Comparison for CNNs based methods in terms of Average Precision (\%). Our method is bold in the last line.}
\label{table:comparetextureandfinegrained}
\centering
\begin{tabular}{c|c c c c c}
\hline
\bfseries Method & \bfseries DTD & \bfseries FMD & \bfseries KTH-T2b & \bfseries CUB-200-2011 & \bfseries FGVC-aircraft \\
\hline
FV-CNN \cite{Cimpoi2015Deep} & $67.3$ & $73.5$ & $73.3$ & $49.9$ & -\\
\hline
FV-FC-CNN \cite{Cimpoi2015Deep} & $69.8$ & $76.4$ & $73.8$ & $54.9$ & - \\
\hline
B-CNN \cite{Lin2016Bilinear} & $69.$6 & $77.8$ & $79.7$ & $74.0$ & $74.3$\\
\hline
Deep-TEN$_\emph{ResNet50}$ \cite{Zhang2016Deep}& - & $80.2$ & $82.0$ & - & - \\
\hline
VGG-16 \cite{Simonyan2014Very}& $66.8$ & $77.8$ & $78.3$ & $68.0$ & $75.0$ \\
\hline
\textbf{Ours} & $\mathbf{68.9}$ & $\mathbf{79.2}$ & $\mathbf{81.1}$ & $\mathbf{72.4}$ & $\mathbf{77.8}$\\
\hline
\end{tabular}
\end{table*}

\begin{table*}[!t]
\renewcommand{\arraystretch}{1.3}
\caption{Comparison for the Components of the Proposed Aggregation Method in terms of Average Precision (\%).}
\label{table:compareconvandkernel}
\centering
\begin{tabular}{c|c c c c c}
\hline
\bfseries Method & \bfseries DTD & \bfseries FMD & \bfseries KTH-T2b & \bfseries CUB-200-2011 & \bfseries FGVC-aircraft \\
\hline
B-CNN \cite{Lin2016Bilinear}& $67.9$ & $77.8$ & $79.7$ & $74.0$ & $74.3$\\
\hline
$512$ conv $1\times1$ + B-CNN \cite{Lin2016Bilinear}& $66.3$ & $74.9$ & $77.9$ & $67.2$ & $65.3$\\
\hline
VGG-16 \cite{Simonyan2014Very} & $66.8$ & $77.8$ & $78.3$ & $68.0$ & $75.0$ \\
\hline
512 conv $1\times1$ + VGG-16 \cite{Simonyan2014Very} & $64.5$ & $74.3$ & $76.7$ & $64.5$ & $75.1$\\
\hline
\textbf{$no$ $conv$ $1\times1$ + Kernel Aggregation Layer} & $\mathbf{66.8}$ & $\mathbf{76.7}$ & $\mathbf{81.0}$ & $\mathbf{72.2}$ & $\mathbf{75.8}$\\
\hline
\textbf{$128$ $conv$ $1\times1$ + Kernel Aggregation Layer} & $\mathbf{67.8}$ & $\mathbf{78.3}$ & $\mathbf{81.1}$ & $\mathbf{64.6}$ & $\mathbf{72.3}$\\
\hline
\textbf{$512$ $conv$ $1\times1$ + Kernel Aggregation Layer} & $\mathbf{67.1}$ & $\mathbf{78.6}$ & $\mathbf{81.3}$ & $\mathbf{72.1}$ & $\mathbf{76.7}$\\
\hline
\textbf{Ours} & $\mathbf{68.9}$ & $\mathbf{79.2}$ & $\mathbf{81.1}$ & $\mathbf{72.4}$ & $\mathbf{77.8}$\\
\hline
\end{tabular}
\end{table*}

\begin{figure}[!t]
\centering
\includegraphics[width=3.5in]{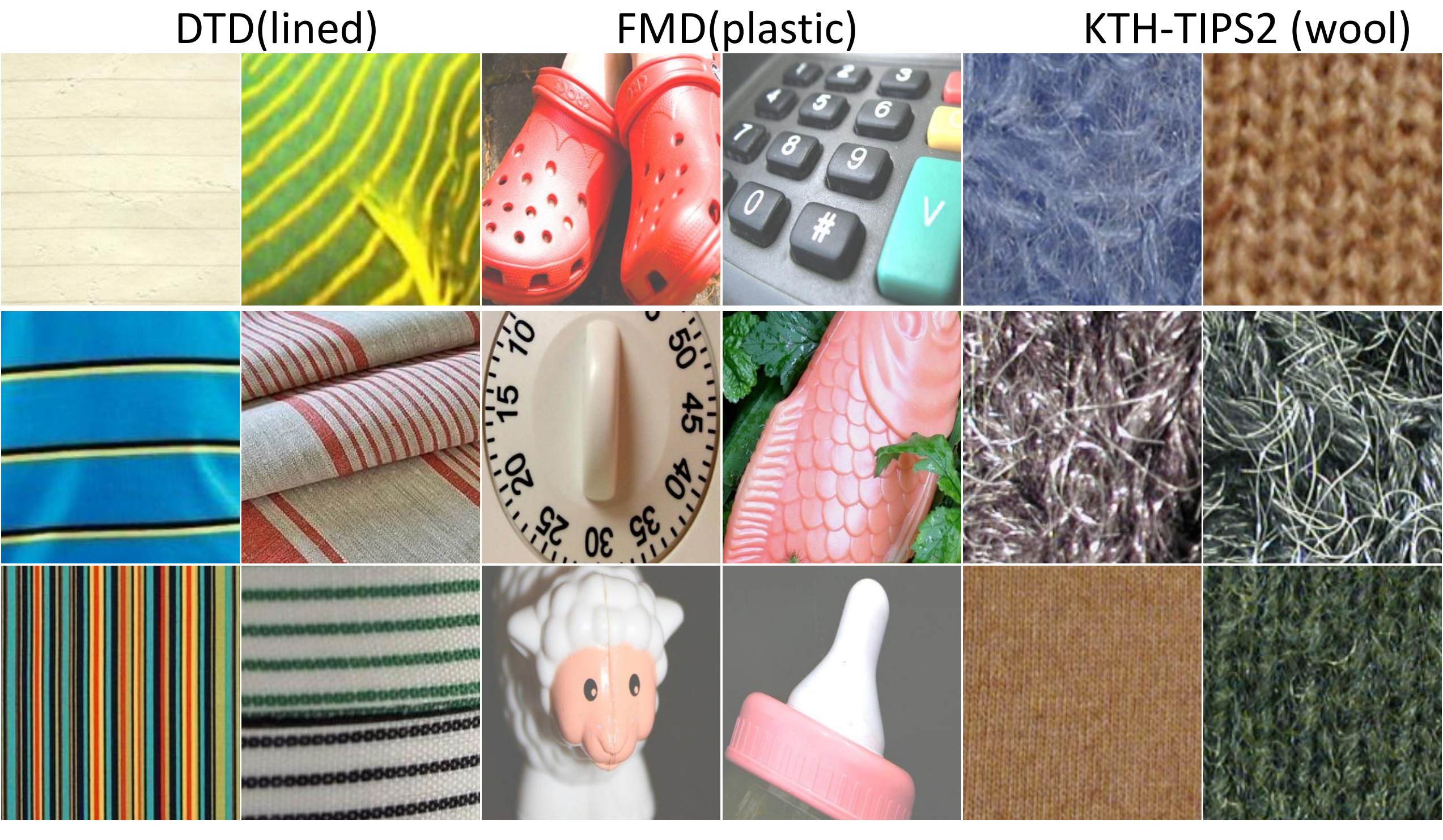}
\caption{Example images on the DTD, FMD and KTH-2b datasets. We can see that images of the same category have huge differences, particularly the plastic class.}
\label{fig:texturedatasets}
\end{figure}

\begin{figure}[!t]
\centering
\includegraphics[width=3.5in]{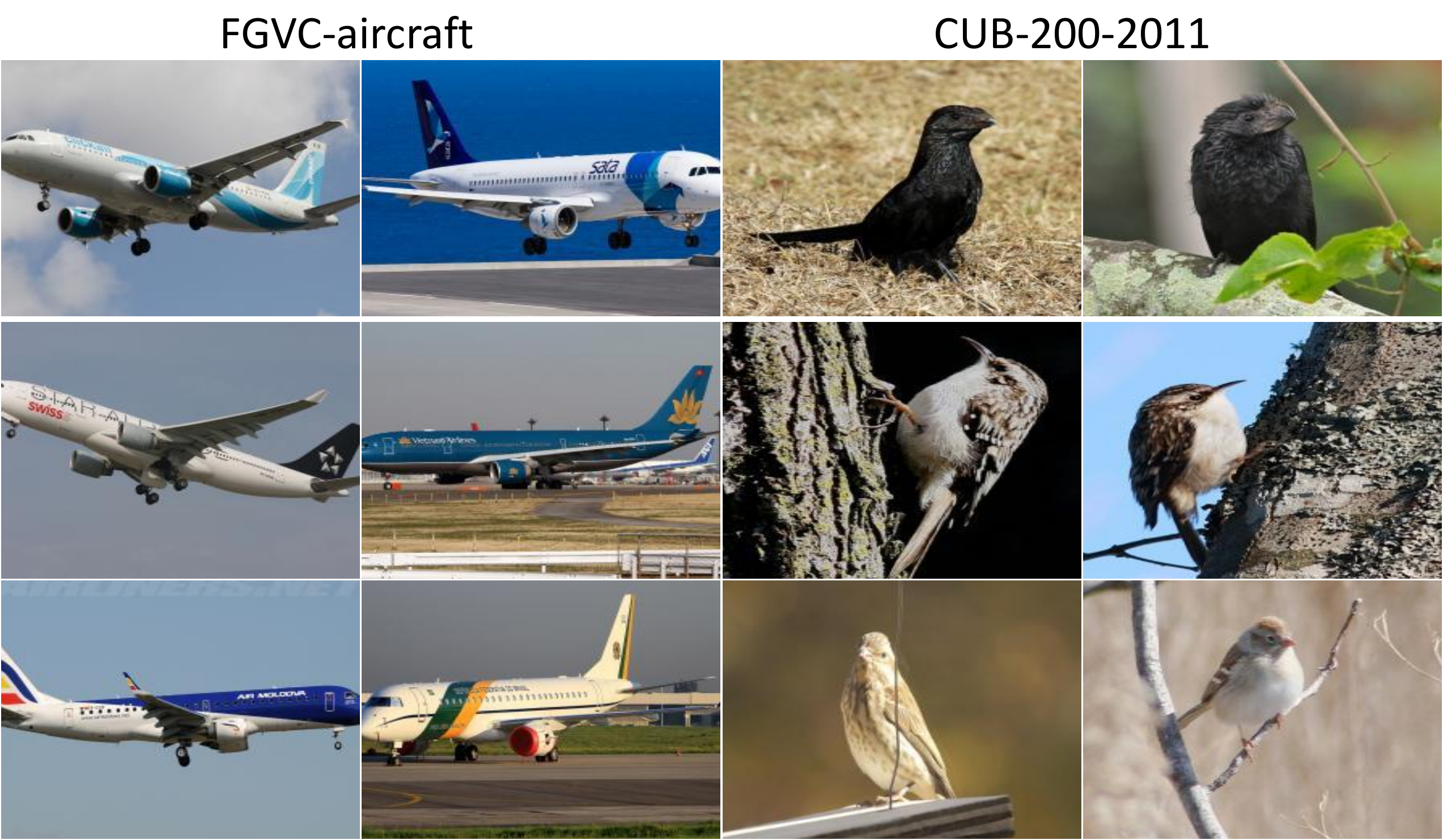}
\caption{Example reference images on the FGVC-aircraft and CUB-200-2011 datasets. The gaps between the pictures are very small.}
\label{fig:finegrainedtexture}
\end{figure}

\subsection{Datasets and Evaluation Protocols}

We choose three texture datasets in the experiments. They are Describable Textures Dataset (DTD) \cite{Cimpoi2014Describing}, Flickr Material Database (FMD) \cite{Sharan2013Recognizing} and KTH-TIPS-2b (KTH-2b) \cite{Caputo2005Class}. DTD and FMD are both collected in the wild conditions while KTH-2b is under the laboratory condition. DTD has $47$ classes, and each class contains $120$ images. There are totally $5640$ images in DTD. FMD contains $1000$ images of $10$ classes, each class has $100$ images. KTH-2b contains $4752$ images of $11$ classes. Fig.~\ref{fig:texturedatasets} illustrates the texture datasets for our experiments. For these texture datasets, we follow the standard train-test protocol. We divide DTD and FMD into three subsets randomly, and use two subsets for training and the rest one subset for testing. Images of KTH-2b are splited into four samples. we train the framework using one sample and test on the rest three samples. Inspired by \cite{Krizhevsky2012ImageNet}, the texture images are augmented. We do $15$ times augmentation to the training data, including randomly cropping $10$ times and picking from center and four corners. The test images are only picked from the center and four corners. The size of cropped images are resized to $224 \times 224$.

We report results on birds and aircrafts fine-grained recognition datasets. The birds dataset \cite{Wah2011The} is CUB-200-2011 which contains $200$ classes and $11788$ images totally. The FGVC-aircraft dataset \cite{Maji2013Fine} contains $10000$ aircraft images of $100$ classes. Fig.~\ref{fig:finegrainedtexture} illustrates some fine-grained images. We train and test the birds and aircrafts fine-grained datasets through the inherent training document. The data augmentation is not applied to the fine-grained images. We resize them to the size of $224 \times 224$. All the texture and fine-grained images are normalized by subtracting means for RGB channels.

\subsection{Implementation Details}
The basic convolutional layers and pooling layers before our SPD aggregation are from the VGG-16 model which is pre-trained on the ImageNet dataset. We remove layers after the \emph{conv5-3} layer of VGG-16 model. Then we insert our SPD aggregation method into the network following the \emph{conv5-3} layer. Finally, a FC layer and a softmax layer follow the vectorization layer where the output dimension of the FC layer is equal to the number of classes. All our networks run under the caffe framework. We use SGD with a mini-batch size of $32$. The training process is divided into two stages. At the first training stage, we fix the parameters before the SPD aggregation method and train the rest new layers. The learning rate is started from $0.1$ and reduced by $10$ when error plateaus. At the second training stage, we train the whole network. The learning rate is started from $0.001$ and divided by $10$ when error plateaus.

\subsection{Experiments for the SPD Aggregation Framework}
In this section, we compare the SPD aggregation framework with some state-of-the-art convolutional feature aggregation methods. First a $1\times1$ convolutional layer whose number of channels is $512$ follows the \emph{conv5-3} convolutional layers. Then our SPD aggregation method including a kernel aggregation layer, an SPD matrix transformation layer and a vectorization layer is inserted after the $1\times1$ convolutional layer. The output size of the SPD matrix transformation layer is $512 \times 512$. Considering these datasets are not big enough, we only use one SPD matrix transformation layer to avoid the overfitting. Table \ref{table:comparetextureandfinegrained} shows the comparison on texture datasets and fine-grained datasets respectively.

The following methods are evaluated in our experiments, FV-CNN \cite{Cimpoi2015Deep}, FV-FC-CNN \cite{Cimpoi2015Deep}, B-CNN \cite{Lin2016Bilinear}, Deep-TEN \cite{Zhang2016Deep} and the pure VGG-16 model \cite{Simonyan2014Very} is used as the baseline. FV-CNN aggregates convolutional features from VGG-16 \emph{conv5-3} layer. The dimension of it is $65600$ and is compressed to $4096$ by PCA for classification. FV-FC-CNN incorporates the FC features and FV vector. B-CNN uses the Bilinear pooling method on the \emph{conv5-3} layer of VGG-16 model. The Deep-TEN uses $50$-layers ResNet and larger number of training samples and image size, while the other methods use VGG-16 model. It is not scientific to compare Deep-TEN with the other feature aggregation methods.

We can see that, our method gets a better performance, especially on KTH-2b and FGVC-aircraft datasets. The average precision of our method on KTH-2b and FGVC-aircraft datasets are $81.1\%$ and $77.8\%$. In contrast, B-CNN achieves $79.7\%$ and $74.3\%$. On DTD and CUB-200-2011 datasets, our method is slightly worse than the B-CNN. The reason may be that the linear relationships among features are dominant on some datasests and the nonlinear relationships are important on the others.

\subsection{Experiments for the Components of the Proposed Aggregation Method} \

\textbf{\bm{$1\times1$} convolutional layer.} As mentioned above, we employ a $1\times1$ convolutional layer to accomplish the preprocessing of convolutional features. In this section, we provide experiments for the necessity of the preprocessing of convolutional features in our method. We design experiments in Table \ref{table:compareconvandkernel}. We combine different numbers of channels of $conv$ $1\times 1$ layer with the kernel aggregation layer. We also add the $conv$ $1\times 1$ layer to the B-CNN and pure VGG network. $No$ $conv$ $1 \times 1$, $128$ $conv$ $1\times 1$ and $512$ $conv$ $1\times 1$ in the Table indicate that whether there is a $conv$ $1\times1$ layer before the kernel aggregation layer and the number of channels of the $conv$ $1\times1$ layer. The kernel aggregation layer in the Table means that there is only the kernel aggregation layer without the SPD matrix transformation layer and the vectorization layer in the network. Table~\ref{table:compareconvandkernel} shows that, the $conv$ $1\times 1$ layer is beneficial to our nonlinear kernel aggregation method. However, it is useless or even harmful to the B-CNN and pure VGG-16 model. Our benefits are brought about by the powerful SPD matrix instead of the preprocessing of convolutional features. But the preprocessing of convolutional features can actually lead to better performance to the SPD aggregation. The reason may be that the convolutional features are totally different from the kernel matrices but have some similarities to the Bilinear matrices or FC features. We can also observe that the number of channels of $conv$ $1\times1$ layer has small influence for the texture datasets. But when it is reduced to $128$, the performance on fine-grained datasets is declined. So we argue that the convolutional features are redundant for the texture datasets but not redundant for the fine-grained datasets.

\textbf{SPD Matrix Generation Process.} To evaluate the effectiveness of the nonlinear SPD matrix generation process, we establish a network that only contains the kernel aggregation layer without the SPD matrix transformation layer and vectorization layer. The outcome is shown in Table~\ref{table:compareconvandkernel}. Without the $conv$ $1\times1$ layer, \emph{i.e.,} $no$ $conv$ $1\times1$ + kernel aggregation layer in the Table, our network is comparable with the B-CNN and pure VGG-16 model. When the $conv$ $1\times1$ layer is added to the network, \emph{i.e.,}, $512$ $conv$ $1\times1$ + Kernel Aggregation Layer in the Table, it has a obvious better performance than the other methods on FMD, KTH-2b and FGVC-aircraft datasets.

\textbf{SPD Matrix Transformation Process.} We design experiments to evaluate the effectiveness of the proposed transformation process in this subsection. Compared with ours, $512$ $conv$ $1\times1$ $+$ kernel aggregation layer in Table~\ref{table:compareconvandkernel} only lacks the SPD matrix transformation layer and the vectorization layer, the rest is the same. Through Table~\ref{table:compareconvandkernel}, we find that the SPD matrix transformation process transforms the SPD matrix to a more suitable and discriminative representation. Especially on DTD dataset and FGVC-aircraft datasets, the performance is improved by $1.8\%$ and $1.1\%$ respectively.

\section{Conclusion}
\label{conclusion}
In this paper, we have proposed a new powerful SPD aggregation method which models the convolutional feature aggregation as an SPD matrix non-linear learning problem on the Riemannain manifold. To achieve this goal, we have designed three new layers to aggregate the convolutional features into an SPD matrix and transform the SPD matrix to be more discriminative and suitable. The three layers include a kernel aggregation layer, an SPD matrix transformation layer and a vectorization layer under an end-to-end framework. We investigated the component decomposition and retraction of the Orthogonal Stiefle manifold to carry out the backpropagation of our model. Meanwhile, the faster matrix operation was adopted to speed up forward and backward backpropagations. Compared with alternative aggregation strategies such as FV, VLAD and bilinear pooling, our SPD aggregation achieves appealing performance on visual classification tasks. Extensive experiments on $11$ challenging datasets have demonstrated that our approach outperforms the state-of-the-art methods.


%

\ifCLASSOPTIONcaptionsoff
  \newpage
\fi




%

\bibliographystyle{IEEEtran}

\bibliography{IEEEabrv}




\end{document}